\documentclass{article}

% if you need to pass options to natbib, use, e.g.:
%     \PassOptionsToPackage{numbers, compress}{natbib}
% before loading neurips_2020

% ready for submission
\PassOptionsToPackage{numbers, compress}{natbib}
% \usepackage{neurips_2020}

% to compile a preprint version, e.g., for submission to arXiv, add add the
% [preprint] option:
\usepackage[preprint]{neurips_2020}

% to compile a camera-ready version, add the [final] option, e.g.:
% \usepackage[final]{neurips_2020}

% to avoid loading the natbib package, add option nonatbib:
% \usepackage[nonatbib]{neurips_2020}

\usepackage[utf8]{inputenc} % allow utf-8 input
\usepackage[T1]{fontenc}    % use 8-bit T1 fonts
\usepackage{hyperref}       % hyperlinks
\usepackage{url}            % simple URL typesetting
\usepackage{booktabs}       % professional-quality tables
\usepackage{amsfonts}       % blackboard math symbols
\usepackage{nicefrac}       % compact symbols for 1/2, etc.
\usepackage{microtype}      % microtypography

% my package
\usepackage{amsmath,amsthm,amsfonts}
\usepackage{xcolor}
\usepackage{graphicx}
\usepackage{subcaption}

% RL definitions

\newcommand{\makevector}[1]{{\mathbf #1}}

\newcommand{\rvec}{{\makevector{r}}}
\newcommand{\wvec}{{\makevector{w}}}

\newcommand{\paravec}{\beta}

\newcommand{\weights}{{\makevector{w}}}

\newcommand{\thetavec}{{\boldsymbol{\theta}}}

\newcommand{\phivec}{{\boldsymbol{\phi}}}

\newcommand{\RR}{{\mathbb{R}}}
\newcommand{\EE}{{\mathbb{E}}}

% Matrices

% \newcommand{\Vpi}{\mathbf{V}^\pi}

% Scalars

\newcommand{\wdim}{d}

\newtheorem{lemma}{Lemma}
\newtheorem{theorem}{Theorem}
\newtheorem{definition}{Definition}

\providecommand{\customgenericname}{}
\newcommand{\newcustomtheorem}[2]{%
  \newenvironment{#1}[1]
  {%
   \renewcommand\customgenericname{#2}%
   \renewcommand\theinnercustomgeneric{##1}%
   \innercustomgeneric
  }
  {\endinnercustomgeneric}
}

\newcustomtheorem{customlemma}{Lemma}

 \newcommand{\defeq}{:=}

% MDP variables
\newcommand{\bellman}[1]{\mathcal{T}#1}

\newcommand{\States}{\mathcal{S}}
\newcommand{\Actions}{\mathcal{A}}

\newcommand{\Qpi}{Q^\pi}

% Learned variables

\newcommand{\eye}{\mathbf{I}}

\newcommand{\inv}{{\raisebox{.2ex}{$\scriptscriptstyle-1$}}}

\newcommand{\dsa}{d}
\newcommand{\OR}{\text{OR}}

\newcommand{\iid}{iid}

%\title{Measuring Interference in Deep Q-Learning}
\title{Towards a practical measure of interference for reinforcement learning}

% The \author macro works with any number of authors. There are two commands
% used to separate the names and addresses of multiple authors: \And and \AND.
%
% Using \And between authors leaves it to LaTeX to determine where to break the
% lines. Using \AND forces a line break at that point. So, if LaTeX puts 3 of 4
% authors names on the first line, and the last on the second line, try using
% \AND instead of \And before the third author name.

\author{
    % \thanks{Corresponding author: vliu1@ualberta.ca.}
  Vincent Liu\textsuperscript{1}, Adam White\textsuperscript{1}\textsuperscript{2}, Hengshuai Yao\textsuperscript{3}, Martha White\textsuperscript{1} \\
  \textsuperscript{1}University of Alberta\\
  \textsuperscript{2}DeepMind\\
  \textsuperscript{3}Huawei Technologies\\
  \texttt{\{vliu1, amw8, whitem\}@ualberta.ca, hengshuai.yao@huawei.com} 
  % examples of more authors
%   \And
%   Coauthor \\
%   Affiliation \\
%   Address \\
%   \texttt{email} \\
  % \AND
  % Coauthor \\
  % Affiliation \\
  % Address \\
  % \texttt{email} \\
  % \And
  % Coauthor \\
  % Affiliation \\
  % Address \\
  % \texttt{email} \\
  % \And
  % Coauthor \\
  % Affiliation \\
  % Address \\
  % \texttt{email} \\
}

\begin{document}

\maketitle

\begin{abstract}
Catastrophic interference is common in many network-based learning systems, and many proposals exist for mitigating it. But, before we overcome interference we must understand it better. In this work, we provide a definition of interference for control in reinforcement learning. We systematically evaluate our new measures, by assessing correlation with several measures of learning performance, including stability, sample efficiency, and online and offline control performance across a variety of learning architectures. Our new interference measure allows us to ask novel scientific questions about commonly used deep learning architectures. In particular we show that target network frequency is a dominating factor for interference, and that updates on the last layer result in significantly higher interference than updates internal to the network. This new measure can be expensive to compute; we conclude with motivation for an efficient proxy measure and empirically demonstrate it is correlated with our definition of interference. 
%This work represents a clear step towards assessing the utility of common architecture and optimization choices, towards the goal of designing better learning systems.
%In this paper, we investigate interference in deep reinforcement learning algorithms. We first provide a clear definition of interference for control in reinforcement learning, which is different from interference for prediction and supervised learning. We then propose several strategies to approximate this measure, and empirically demonstrate that they are correlated with our definition of interference. Using these measures, we empirically investigate the correlation, across multiple agents and architectures, between interference and several measures of learning performance, including stability, sample efficiency, and online and offline control performance. We further provide more focused insights into interference properties of well-known approaches to improving learning---including using target networks and optimizers like Adam. We find, amongst several other conclusions, that Adam reduces interference and that updates on this last layer result in significantly higher interference than updates internal to the network. 
\end{abstract}

\section{Introduction}
% generalization in supervised learning
Generalization is a key property of reinforcement learning algorithms with function approximation. It is important for an agent to generalize from previous encountered samples to a larger subset of samples which have not been seen. Generalization has been extensively studied in supervised learning, where we normally assume that we can sample \iid\ inputs from a fixed input distribution and the targets are sampled from a fixed conditional distribution. 

% issue in non-iid setting: catastrophic interference
The assumption of \iid\ inputs, however, does not hold in general. When learning on a correlated stream of data, as in RL, the learner might fit the learned function to recent data and potentially overwrite or forget previously learned information. This issue is called \emph{catastrophic interference}. Interference occurs even in the \iid\ prediction setting: an update on some set of states is said to interfere with predictions in another state when that update decreases accuracy for that state. This interference is catastrophic if it causes significant forgetting, which is typically only observed with temporally correlated data, such as in RL~\citep{bengio2020interference,goodrich2015neuron,liu2019utility}
or in the sequential multi-task learning setting~\citep{kirkpatrick2017overcoming, riemer2018learning}. % add citation
The conventional wisdom is that catastrophic interference is particularly problematic in the control setting in RL, even single-task RL, because 
(a) when an agent explores, it receives a sequence of observations, which are likely to be temporally correlated;
(b) the agent is changing its policy while learning, making the sequence of observations non-stationary; and 
(c) the agent uses its own estimates as targets (as in temporal difference learning), which makes the target outputs non-stationary. 

It is as yet difficult to verify this conventional wisdom, as we do not have effective means to measure interference. It is commonly held that replay, target networks and the choice of representation~\citep{liu2019utility} all mitigate interference, and so improve performance. But, without a clear definition and way to measure interference in RL, it is hard to test these hypotheses.
There has been work quantifying interference for supervised learning~\citep{chaudhry2018riemannian,fort2019stiffness,kemker2018measuring,riemer2018learning}, with some empirical work even correlating catastrophic forgetting and properties of task sequences in supervised learning~\citep{nguyen2019toward}. % add citation
In prediction, however, the definition of interference is relatively straightforward: interference corresponds to decreases in prediction accuracy, which can be measured using a stored test set. This definition, unfortunately, does not extend to the control setting: if we use value function accuracy, then we have a changing performance measure as the policy changes. 
Several papers have investigated generalization and transfer in RL ~\citep{cobbe2018quantifying,farebrother2018generalization,packer2018assessing,rajeswaran2017towards}, demonstrating that learning on new environments results in drops in performance on previously learned environments~\citep{cobbe2018quantifying}, or re-initialization can help a plateaued agent make further progress~\citep{fedus2020catastrophic}. These works, however, do not directly measure levels of interference, and instead focus on test performance on new environments or new segments of environments. 

In this paper, we propose a definition of interference for control in RL using an existing performance measure, called the Optimality Residual (OR). The interference is defined as the change in OR, with two statistics to reflect the presence of catastrophic interference.
%The OR is the difference between the action-values for the optimal policy and the action-values for the current policy. 
%It is a suitable measure because it does not change as the policy changes.
We evaluate of our interference measures by computing the correlation with several performance metrics, including sample efficiency and stability. We also use these measures to investigate the role of common deep RL techniques, including target networks, experience replay buffer size, mini-batch size, network size, and interference in different layers. 
%We also measure interference within a neural network. We find that updates on this last layer result in significantly higher interference than updates internal to the network. 
It is difficult---or in some cases impossible---to estimate this exact interference measure. We provide an approximation, by deriving an upper bound on the OR, and demonstrate empirically that the approximation is strongly correlated with the exact interference.

\section{Background}

In reinforcement learning (RL), an agent interacts with its environment, receiving observations and selecting actions to maximize a reward signal. We assume the environment can be formalized as a Markov decision process (MDP). An MDP is a tuple $(\States, \Actions, \text{Pr}, R, \gamma)$ where $\States$ is a set of states, $\Actions$ is an set of actions, $\text{Pr}:\States\times\Actions\times\States\to[0,1]$ is the transition probability, $R:\States\times\Actions\times\States\to\RR$ is the reward function, and $\gamma \in [0,1]$ a discount factor. 
%$\gamma: \States\times\Actions\times\States\to[0,1]$ is a transition-based discount factor, which generalizes beyond constant $\gamma$ to more easily clearly express episodic problems \citep{white2017unifying}. 
The goal of the agent is to find a policy $\pi:\States\times\Actions\to[0,1]$ to maximize the expected discounted sum of rewards. 

Given a fixed policy $\pi$, the action-value function $ \Qpi: \States\times\Actions\to\RR$ is defined as $ \Qpi(s,a) \defeq \mathbb{E}[ \sum_{k=0}^\infty \gamma^k R_{t+k+1} | S_t = s, A_t=a ]$, 
%%
%\begin{equation*}
% \Qpi(s,a) \defeq \mathbb{E}\left[ \sum_{k=0}^\infty \gamma^k R_{t+k+1} | S_t = s, A_t=a \right] 
%\end{equation*}
%% MARTHAC: Do we need the value function?
%%\begin{align*}
%% \Vpi(s) &\defeq \mathbb{E}_{\pi}\left[ \sum_{k=0}^\infty \gamma^k R_{t+k+1} | S_t = s \right],\ \forall s\in\States,\\
% %\Qpi(s,a) &\defeq \mathbb{E}_{\pi}\left[ \sum_{k=0}^\infty \gamma^k R_{t+k+1} | S_t = s, A_t=a \right],\\ &\forall (s,a)\in\States\times\Actions. 
%%\end{align*}
where $R_{t+1}$ denotes the reward at time $t+1$, i.e. $R_{t+1} = R(S_t,A_t,S_{t+1})$, $S_{t+1} \sim \Pr(\cdot|S_t,A_t)$, and actions are taken according to policy $\pi$: $A_{t} \sim \pi(\cdot|S_{t})$.
The optimal value function $Q^*$ is defined as
$Q^*(s,a) \defeq \sup_{\pi} Q(s,a)$, with $\pi^*$ the policy that is greedy w.r.t. $Q^*$.
The optimal value function can be obtained using the  Bellman optimality operator for action values $\bellman:\RR^{|\States|\times|\Actions|}\to\RR^{|\States|\times|\Actions|}$:
\begin{align*}
    % (\bellman V)(s) &\defeq \max_{a\in\Actions} \sum_{s'\in\States}\Pr(s,a,s')[R(s,a,s') + \gamma V(s')], \\
    (\bellman Q)(s,a) \defeq \sum_{s'\in\States}\Pr(s'| s,a)\left[R(s,a,s') + \gamma\max_{a'\in\Actions} Q(s',a')\right]
\end{align*}
$Q^*$ is the unique solution of the Bellman equation $\bellman Q = Q$. Q-learning is built on this operator, iteratively updating to find the fixed point of the Bellman optimality operator. 
%Once we have the optimal value function $Q^*$, we can choose actions according to the greedy policy w.r.t. $Q^*$, that is, $\pi(a|s) = 1$ if $a = \argmax_{a'} Q^*(s,a')$ and $0$ otherwise. It is known that the greedy policy w.r.t. the optimal value function is the optimal policy. 

We can use neural networks to learn an approximation to the optimal action-value.
%% MARTHAC: This is too soon. Introduce this layer when you need it, in the rep section
%The last layer typically does not have an activation; hence we can view a value function as a \emph{two-part approximation} with a representation function and a linear weight:
%% MARTHAC: Used to say Qpi approx, but we are not approximating Qpi
%\begin{equation*}
% Q_{\weights, \paravec}(s, a) \defeq \phivec_\paravec(s,a)^\top \weights
%%   \label{eq:nonlinear_fa}
%\end{equation*}
%%
%where $\weights \in \RR^\wdim$ is the weights in the last layer and $\phivec_\paravec : \States \times \Actions \to \RR^\wdim$ is the \emph{representation} learned by the network with weights $\paravec$, composed of all the hidden layers in the network. The function $\phivec_\paravec(s,a)$ corresponds to the last layer in the network, with $\paravec$ the weights of the network. 
%% The efficacy of the action-value approximation, therefore, relies on the representation $\phivec_\thetavec(s,a)$.
%In most deep reinforcement learning algorithms, the representation and weight are learned concurrently. Let $\thetavec=[\weights, \paravec]$ and $N$ be the total number of parameters, 
For $Q_\thetavec$ the approximation, with parameters $\thetavec$, the online update for non-linear semi-gradient Q learning is 
\begin{align*}
    \thetavec_{t+1} &\gets \thetavec_{t} + \alpha \delta_t \nabla_{\thetavec_{t}} Q_{\thetavec_{t}}(S_{t}, A_{t})
    \ \ \ \ \ \ \ \ \text{where }  \delta_t \defeq R_{t+1} + \gamma \left[\max_{a' \in \mathcal{A}} Q_{\thetavec_{t}}(S_{t+1}, a') - Q_{\thetavec_{t}}(S_t, A_t)\right] .
\end{align*}
%\begin{align*}
%    \thetavec_{t+1} &\gets \thetavec_{t} + \alpha \delta(S_t, A_t, R_{t+1}, S_{t+1}) \nabla_{\thetavec_{t}} Q_{\thetavec_{t}}(S_{t}, A_{t})
%    \text{where }  \delta(s, a, r, s') \defeq r + \gamma \max_{a' \in \mathcal{A}} Q_{\thetavec_{t}}(s', a') - Q_{\thetavec_{t}}(s, a)] .
%\end{align*}
%
This update with NNs typically leads to unstable performance, so is often augmented with experience replay~\citep{lin1993reinforcement} and target networks, introduced in DQN~\citep{mnih2015human}. Replay consists of storing transitions in a buffer $D$, and performing mini-batch updates sampled from this buffer, per step. Target networks use an older set of parameters $\bar \thetavec$ for $\max_{a' \in \mathcal{A}} Q_{\bar\thetavec_{t}}(s', a')$, to make the update target more stationary. 
% MARTHAC: No space, and replay is well understood now
%On each step $t$, for a mini-batch $\{(S_{i_j},A_{i_j},R_{i_j},S_{i_j}')\}_{j=1}^{B}$ sampled from $D$, the DQN update is 
%%
%\begin{align*}
%    \thetavec_{t+1} \!\gets \!\thetavec_{t} \!+\! \alpha \frac{1}{B} \!\sum_{j=1}^{B} \!\delta(S_{i_j},A_{i_j},R_{i_j},S_{i_j}') \nabla_{\thetavec_{t}}\!Q_{\thetavec_{t}}(S_{i_j}, A_{i_j})
%\end{align*}
%The transition for time step $t$ is added to the buffer, and the target network parameters are periodically set to the current parameters, such as every 1000 steps. 

\section{A Simple Example Relating Interference and Control Performance}
%where we have expectations on how interference should impact control, and see 
Before discussing our definition and measure of interference, it is useful to use a controlled setting to illustrate how algorithmic choices impact interference and performance. For example, we expect agents with poor representations to suffer from more interference. 
%If the replay buffer is too small, then the data in the buffer is highly temporally correlated and updates are more likely to interfere. 
If we have a very good hand-designed, sparse representation---such as tile-coding---we expect much less interference than a neural network representation that generalizes aggressively. We use three such agents for demonstration: Q-learning with tile-coding, DQN with the Adam optimizer and DQN with the RMSprop optimizer. 

The controlled environment, called Two-Rooms, consists of two open rooms with different start and goal states. The trick is that in the first room the agent should navigate up and to the right, and in the second room down and to the left. The input state contains the xy position of the agent, and which room the agent is in. The tile coding agent represents each room independently, whereas DQN is free to generalize across rooms. The agent begins life in one room and trains just long enough (10k steps) to learn a near optimal policy. Then the agent is placed in the second room and trained much longer (70k steps) than required---to the point that over specialization is possible. Finally, the agent is placed back to learn in the first room, to evaluate the impact of extended training in the second room. 
%This simulates the setting where we can observe interference within a room, and see if learning in a different part of the world (room 2) overwrites previous learning (room 1). Not only does this domain allow comparison of different architectures, but also 
% MARTHAC: I like this sentence, but its redundant enough with above i removed it for space
%This environment provides a sanity check on our expectations of how interference and online performance interact with different training regimes.  

In Figure \ref{fig:tworooms-demo}, we show online learning curves and the corresponding interference (defined in Section \ref{sec:interference_over_time}) in each room separately. Generally, we can see that when the agent is learning, there is interference; the key issue is whether learning in one room interfere with the other. The tile-coding representation---with no features shared between rooms---has
%we see that the return to room one has no negative impact on performance, indicating no interference. Indeed, we can see there is 
no interference in one room, while training in the other. The performance of the DQN agents drops when transfering from room 2 back to room 1. The interference is catastrophic: the agent using RMSProp does not recover the optimal policy, and the agent using Adam learns more slowly than starting from scratch. 
\begin{figure}[t]
    \centering
    \begin{subfigure}[t]{0.32\textwidth}
        \centering
        \includegraphics[width=\textwidth]{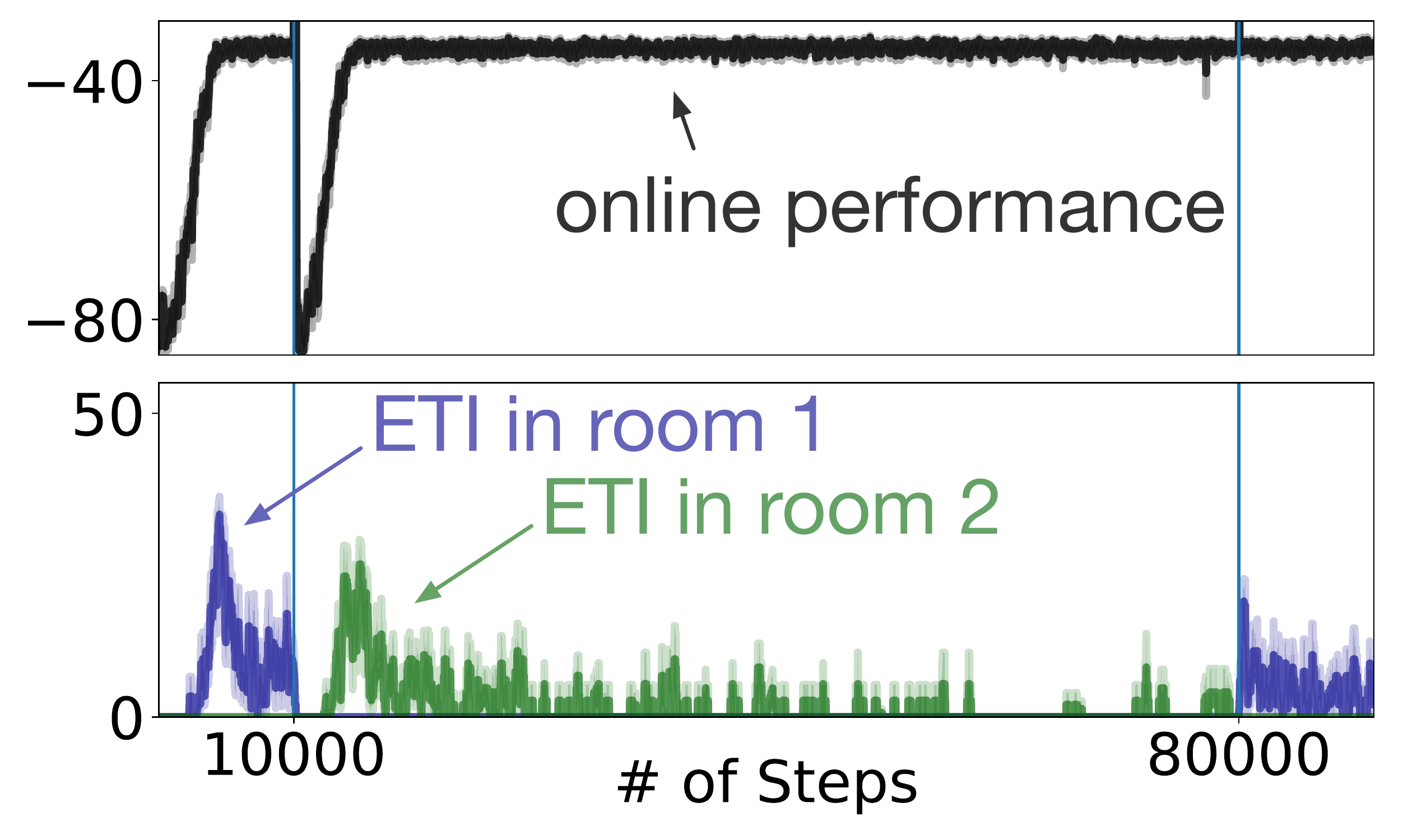}
        \caption{Tile Coding}
    \end{subfigure}
    \begin{subfigure}[t]{0.32\textwidth}
        \centering
        \includegraphics[width=\textwidth]{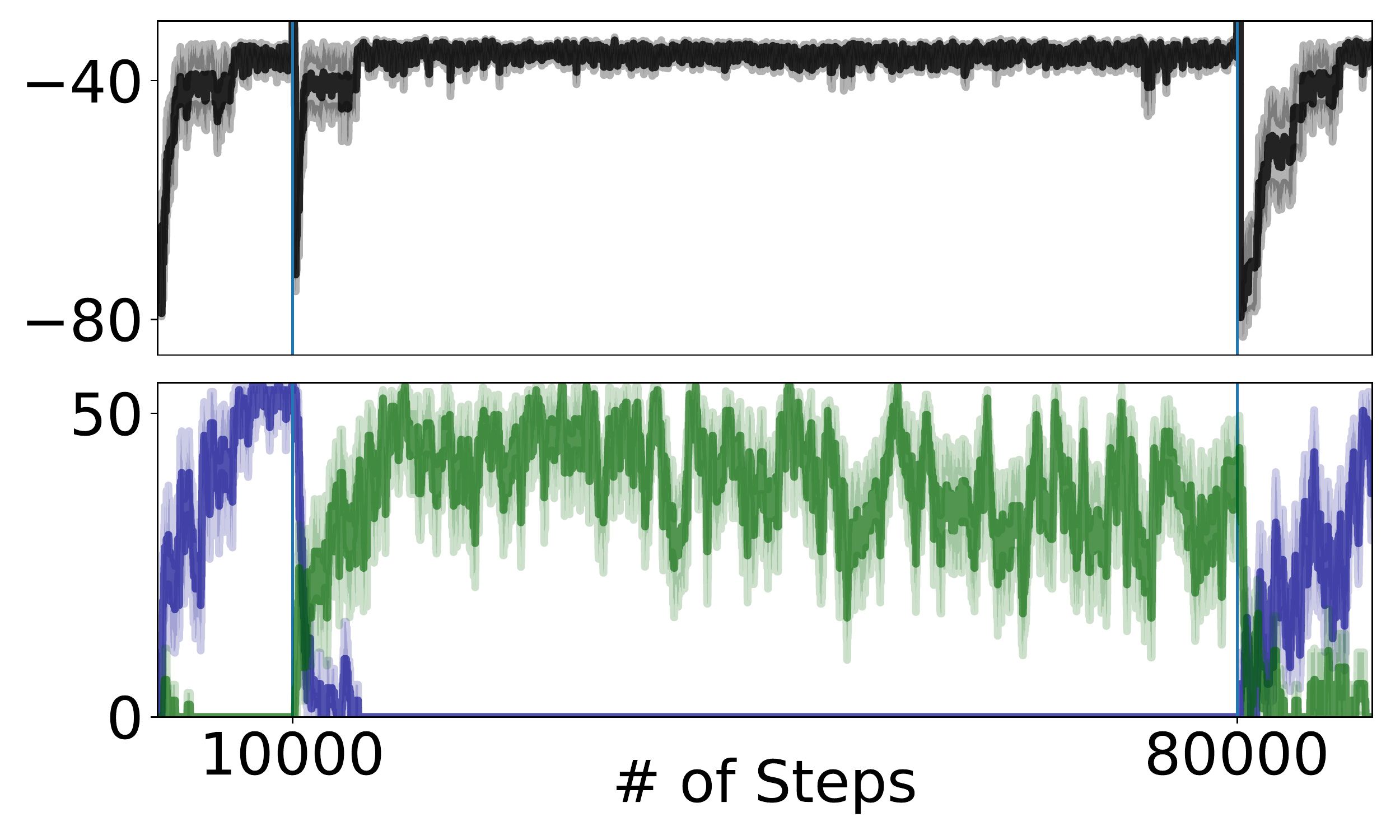}
        \caption{DQN with Adam}
    \end{subfigure}
    \begin{subfigure}[t]{0.32\textwidth}
        \centering
        \includegraphics[width=\textwidth]{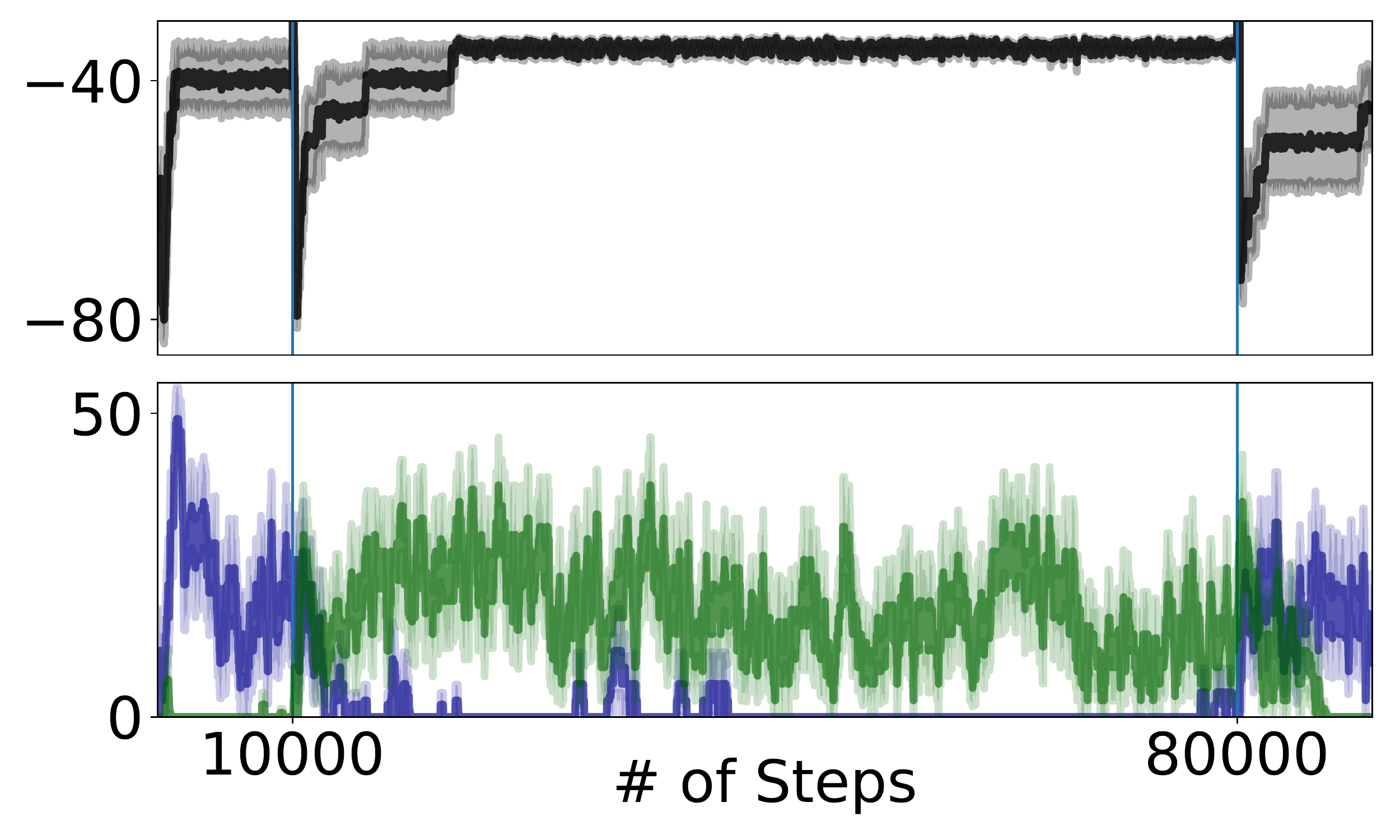}
        \caption{DQN with RMSProp}
    \end{subfigure}
    \caption{The Two-Room example. We plot the learning curve of Q-learning with different architecture choices. The three stages are indicated by the two vertical lines. ETI is a measure of interference, which is defined in a latter section. The curves are averaged over 10 runs with one standard error.}
    \label{fig:tworooms-demo}
    \vspace{-0.4cm}
\end{figure}

\section{Measuring Interference in RL}

In this section, we define interference for control in RL.
% and provide a practical approximation to that interference measure. 
We start by discussing the definition of interference in RL for the prediction setting, where we learn $\Qpi$; we do this for clarity and to provide a contrast to the control setting. We highlight that to define whether an update causes interference requires an answer to the question: interference according to what objective? We propose a natural choice for control: the distance to the optimal action-value function.
% which we call the Optimality Residual. 
We discuss two ways to summarize interference over time, to gauge whether an agent has high or low interference. 

\subsection{Interference in Prediction}

In the prediction setting, we estimate $\Qpi$ for a fixed $\pi$. A typical measure of prediction error is the mean-squared value error (MSVE), with state-action weighting $\dsa: \States \times \Actions \rightarrow [0, \infty)$
\begin{align*}
\text{MSVE}(\thetavec) &\defeq \| \Qpi - Q_\thetavec\|_d^2 
= {\textstyle  \sum_{s \in \States, a \in \Actions}} \dsa(s,a) (\Qpi(s,a) - Q_\thetavec(s,a))^2
\end{align*}
To quantify \emph{expected interference}, we can look at the difference in MSVE before and after an update:
$\text{MSVE}(\thetavec_{t+1}) - \text{MSVE}(\thetavec_t)$.
If this value is positive, the update generally degraded performance and there was more interference on average than positive generalization. 
If this value is negative, the update generally improved performance and there was more positive generalization than interference.

There are existing interference measures based on gradient similarity that could be used for the prediction setting. To see why, assume we can directly minimize the MSVE and so have loss  $L(\thetavec, s, a) = \tfrac{1}{2}(\Qpi(s,a) - Q_{\thetavec}(s,a))^2$. If we perform an update using $(s_t,a_t)$
\begin{align*}
    \thetavec_{t+1}
    &= \thetavec_{t} - \alpha \nabla_{\thetavec} L(\thetavec_t, s_t, a_t) 
    = \thetavec_{t} + \alpha (\Qpi(s_t,a_t) - Q_{\thetavec_t}(s_t,a_t) )\nabla_{\thetavec} Q_{\thetavec_t}(s_t,a_t)
\end{align*} 
then the interference of that update to one point $(s,a)$ is $L(\thetavec_{t+1}, s, a) - L(\thetavec_t, s, a)$. 
Using a Taylor series expansion, we get the following approximation assuming we have a small step-size $\alpha$:
\begin{equation*}
    L(\thetavec_{t+1}; s,a) - L(\thetavec_{t}; s,a)
     \approx \nabla_{\thetavec} L(\thetavec_{t}; s_t,a_t)^\top (\thetavec_{t+1} - \thetavec_{t}) 
     = -\alpha \nabla_{\thetavec} L(\thetavec_{t}; s_t,a_t)^\top \nabla_{\thetavec} L(\thetavec_{t}; s,a) 
\end{equation*}
This approximation corresponds to \emph{gradient alignment}, which has been used to learn neural networks that are more robust to interference \cite{lopez2017gradient,riemer2018learning}. They measure if $\nabla_\thetavec L(\thetavec_{t}; s_t,a_t)^\top \nabla_{\thetavec} L(\thetavec_{t}; s,a) > 0$, to determine if there is positive generalization between two samples; they generally encourage these dot-products to be positive. Other work used gradient cosine similarity, to measure the level of transferability between tasks \citep{du2018adapting}, and to measure the level of interference between objectives \citep{schaul2019ray}.
A somewhat similar measure was used to measure generalization in reinforcement learning \citep{achiam2019towards,bengio2020interference}, using the dot product of the gradients of Q functions $\nabla_{\thetavec} Q_{\thetavec_{t}}(s_t,a_t)^\top \nabla_{\thetavec} Q_{\thetavec_{t}}(s,a)$. This is related in the sense that, for the MSVE with $\delta_t = \Qpi(s_t,a_t) - Q_{\thetavec_t}(s_t,a_t)$,
 $\nabla_\thetavec L(\thetavec_{t}; s_t,a_t)^\top \nabla_{\thetavec} L(\thetavec_{t}; s,a)
    = \delta_t \delta_i \nabla_{\thetavec} Q_{\thetavec_t}(s_t,a_t)^\top \nabla_{\thetavec} Q_{\thetavec_t}(s,a)$.
This measure neglects the direction of the gradients, and so measures both positive generalization as well as interference.

In all the above, interference is measured relative to a chosen performance objective. This performance objective could even be different than the objective directly optimized by the agent. For example, the agent could optimize the MSPBE, as is done by TD-learning, and performance measured with MSVE. We could also have chosen to define the interference using the MSPBE as the performance objective. This is all to say that defining interference is relative to many givens: we need to clearly specify our performance objective, the update for the weights and what samples are used in that update. The same nuance arises in the control setting, which we discuss next. 
 
\subsection{Interference in Control}

Given a value estimation $Q_{\thetavec}$, let $\pi_{\thetavec}$ be the policy with respect to the current estimation $Q_{\thetavec}$. For example, $\pi_{\thetavec}$ can be the greedy policy w.r.t. $Q_{\thetavec}$. A previously proposed measure~\citep{farahmand2011regularization,williams1993tight} for the quality of a policy is the distance between the action-values for that policy and the optimal action-values
\begin{align*}
    \OR(\thetavec) 
    &\defeq {\textstyle \sum_{s \in \States, a \in \Actions}} \dsa(s,a) | Q^*(s,a) - Q^{\pi_{\thetavec}}(s,a) | 
    = \EE_d[ Q^*(S,A) - Q^{\pi_{\thetavec}}(S,A) ].
\end{align*}
We call this the \emph{Optimality Residual} (OR). The distribution $\dsa$ specifies the importance of a state-action pair in the OR. Often, it corresponds to the sampling distribution. For example, $d(s,a) = \nu(s) u(a | s)$ where $\nu$ is a start-state distribution and $u$ is a behavior policy. Notice that the absolute value is not included in the second line, because $Q^*(s,a) \ge Q^{\pi_{\thetavec}}(s,a)$ for all policies. 
% MARTHAC: Why does u have to be a uniform measuer?
%where $\nu$ is a performance-measuring distribution on $\States$ and $u$ is a random policy which has an uniform measure over $\Actions$, i.e. $u(a|s)=\frac{1}{|\Actions|}$ for all $s\in\States$ and $a\in\Actions$. 
This objective is one appropriate choice, because the target $Q^*(s,a)$ does not change as the policy changes. 

% MARTHAC: We might not have space for this nuance. Put in appendix
%We might choose a distribution $d$ other than the start state distribution, for example, the discounted future state distribution $d_{\pi,\gamma,\nu} (s) \defeq (1-\gamma) \sum_{t=0}^{\infty} \gamma^{t} \Pr(S_t=s; \pi, s_0 \sim \nu)$ which puts weights on the state-action pairs the current policy visits. In fact, one can show that 
%% $\EE_d[Q^{\pi_{\thetavec}}(S,A)] = (1-\gamma) \EE_{d_{\pi,\gamma,d}\mu}[Q^{\pi_{\thetavec}}(S,A)]$. 
%choosing the start state distribution is equivalent to choosing the the discounted future state distribution (up to a constant factor).
%% do we need to show this? now i think it is not true for OR. it is true for Q^pi

Once we have this objective, the definition for expected interference parallels the prediction setting
\begin{equation}
%    \text{PI}((s, a), (s_t,a_t), \thetavec_t)
 %   &\defeq (Q^*(s,a) - Q^{\pi_{\thetavec_{t+1}}}(s,a)) - (Q^*(s,a) - Q^{\pi_{\thetavec_t}}(s,a)) \\
%    &= Q^{\pi_{\thetavec_t}}(s,a) - Q^{\pi_{\thetavec_{t+1}}}(s,a) \\
    \text{EI}(\thetavec_t, B_t)
    \defeq  \EE_d[\OR(\thetavec_{t+1}, (s,a)) - \OR(\thetavec_{t}, (s,a))] = \EE_d[Q^{\pi_{\thetavec_{t}}}(S,A) - Q^{\pi_{\thetavec_{t+1}}}(S,A)].
\end{equation}
where $B_t$ is the mini-batch of data used to update $\theta$ and $\OR(\thetavec, (s,a)) \defeq Q^*(s,a) - Q^{\pi_{\thetavec}}(s,a)$. 

When running experiments in reinforcement learning, where we have a simulator, it is in fact possible to estimate this quantity. One of the primary motivations for measuring interference is to facilitate investigation
by researchers. The OR can be estimated simply by using rollouts from a given $(s,a)$. The policy $\pi_{\thetavec_t}$ can be started from $(s,a)$ multiple times, generating multiple trajectories. These can be used to get a sample average estimate of the expected return from $(s,a)$ under $\pi_{\thetavec_t}$. This can then be repeated for $\pi_{\thetavec_{t+1}}$. The EI is the average OR across $(s,a) \sim d$.
In general, though, estimating the EI can be very expensive, because a large number of rollouts may be needed to get accurate estimates \citep{sajed2018high}. In RL experiments without simulators, it is generally not feasible. In Section \ref{sec:approximation}, we discuss a more practical approach to approximate the EI. First, though, we validate the utility of this true EI.

\subsection{Summarizing Interference over Time}
\label{sec:interference_over_time}

To determine the impact of interference on agent performance, we need to be provide summary statistics of interference over time. 
The above are instaneous interference measures, which can tell us how much interference occurred after an update. However, this interference
might have long range impacts, and so performance changes on this step might be impacted by interference many steps ago. 
%To identify or visualize such connections, we can provide statistics about the EI over a recent window of time. 

A simple choice is to use an average EI over the last window of time. Unfortunately, this choice is problematic because the EI is signed. A negative EI actually indicates improvement---good generalization. An agent could oscillate between positive and negative EIs, with the average appearing to be near zero. The mean of skewed, potentially multi-modal distributions is not a particularly suitable choice, and we can consider other statistics. 

To be more systematic about the choice, let $X$ be the random variable corresponding to EI over the desired window of time. For example, if the agent has been learning for 1000 steps, and the desired window of time is all learning, then $X$ is a scalar RV with a density over the possible instantaneous EIs over this window of 1000 steps. The empirical distribution is the 1000 values of EI. 
% MARTHAC: For simplicity, lets not do over time. Results are interesting without such weightings.
%\martha{Do we instead want to define an RV of the EI over time, in case we want statistics that keep the order? For example, an exponential average. Then we could define the trajectory $X$ and the measurement a statistic of that trajectory across all possible trajectories.}

% Vincent: I don't think we need IOT so i remove its description
We consider two statistics, one to measure if the agent had large interference values and the other if interference was highly variable. 
% The first useful statistic is simply to consider the positive and negative parts of $X$ separately, which correspond to interference and good generalization respectively. The Interference Over Time is  $\EE[X|X > 0]$ and the Good Generalization Over Time is  $-\EE[X|X < 0]$. By reporting these two values separately, we can distinguish between agent learning, and agent interference. 
%The simplest statistic is to consider the positive and negative parts of $X$ separately, which correspond to interference and good generalization respectively.
%We can go further, and only report statistics about these values above some threshold. 
Catastrophic interference may occur even with only a few steps of very large interference; when reported as an average over time, these large values might be dominated by many small ones. Instead, we can look at the average of the top 10\% of interference values---the largest interference---over the window of time. 
% If this number is small, then generally interference was small. 
If it is large, then at least 10\% of the time the agent had large interference. This type of measure has been used to measure risk, and termed Conditional Value at Risk or sometimes Expected Tail Loss. Correspondingly, we call this the Expected Tail Interference (ETI), defined as 
\begin{equation}
    \text{ETI}_{\alpha}(X) = \EE[X|X \geq \text{Percentile}_{1-\alpha}(X)]
\end{equation}
 $\text{Percentile}_{1-\alpha}(X)$  is the (1-$\alpha$)-percentile of the distribution of $X$. In our experiments, we set $\alpha=0.1$.
% This statistics is invariant to the frequency of evaluation~\citep{chan2020measuring}, which is useful because \martha{todo: why is this useful?}. \vince{Maybe we can remove this sentence.}

Finally, we can also provide a more accurate measure of variance by considering the interquartile range: the difference between the 75th and 25th percentiles. We call this the Interference Dispersion
\begin{equation}
    \text{Interference Dispersion}(X) = \text{Percentile}_{0.75}(X) - \text{Percentile}_{0.25}(X).
\end{equation}
Previous work~\citep{chan2020measuring} has also considered using conditional value at risk and interquartile range to measure the reliability of reinforcement learning algorithms.

\section{Empirical Evaluation: Correlation between Interference and Performance}\label{sec:correlation}
% In the section, we first demonstrate that the approximations of interference are related to true interference, and compare several sampling distributions and approximation strategies.
In the section, we evaluate the utility of the interference measures by computing the correlation with several performance measures, including efficiency, stability and episodic return. The goal is both to validate the utility of these measures of interference---as they would not be useful if uncorrelated with performance---as well as to investigate the impact of common deep RL techniques on interference and control performance. 
%Our goal is to demonstrate that the interference measures are useful measures for stability, efficiency and control performance. 
%Then we use the interference measures to gain insights into deep reinforcement learning algorithms, including investigating common deep RL techniques, and interference within a network. 

\textbf{Environments} \ \ We use Two-Rooms, designed to induce interference across the rooms, and Cart-pole, in which interference has previously been shown to be problematic~\citep{goodrich2015neuron}. Two-Rooms is designed so that the agent has sufficient information to learn optimal policies for each room, but the overlap in inputs for the two rooms is likely to cause interference for standard neural network architectures. 
Cart-pole involves balancing a pole~\citep{barto1983neuronlike}. Though a simple environment, deep RL agents fail in this domain, or learn unstable policies, as we show in our experiments, and so it provides a useful setting to understand the role of interference on performance.
The agent is run a maximal number of steps: 90k for Two-Rooms and 20k for Cart-pole. We run for a fixed number of steps, rather than episode, because otherwise some agents get more environment interactions if they have long episodes. All experiments are averaged over 10 runs.

\textbf{Agents} \ \
We investigate well-known deep RL techniques to improving learning, including experience replay, mini-batch updating, Adam optimization (particularly the addition of momentum), and target networks. 
We consider networks of two hidden layers, with various number of nodes in each layer, batch sizes, buffer sizes and target network update delay. The set of each hyperparameter and other experiment details are in Appendix \ref{app_expdetails}.
% We consider networks of two hidden layers, with the number of nodes in each layer in $\{128, 256, 512\}$. We tested batch sizes in $\{16, 64, 256\}$, buffer sizes $\in \{100, 1000, 2000\}$ and target network update delay in $\{0, 100, 200, 400, 800\}$ where zero means no target network is used. Other experiment details are in Appendix \ref{app_expdetails}.

\textbf{Performance Metrics}\ \
We consider four performance measures: \emph{average episodic return} (AER), \emph{consecutive stable performance}, \emph{stable AER} and \emph{sample efficiency}. The AER reflects accumulated reward by the agent, across all steps of learning. It is computed as follows. For each step $i$ during learning, the agent has an associated expected return $\bar{G}_i$: how much reward it currently gets within an episode, in expectation. This can be estimated using multiple runs, or using a recent window of returns, to get estimate $\bar{G}_i$. The AER is the average of these across the last 50\% of steps: $\text{AER} \defeq \tfrac{2}{n} \sum_{i=n/2}^n \bar{G}_i$. The AER reflects the agents performance, on average, across the second half of its lifespan. We use the second half to gauge performance, because we are interested in assessing the impact of interference in what the agent has learned.

The AER can be measured using online or offline return. An \emph{offline} $\bar{G}_i$ is an estimate of the expected return $\EE_{sa \sim d}[Q^{\pi_{\thetavec_t}}(s, a)]$ for the policy at time step $i$, measured by averaging over Monte Carlo rollouts. It asks how well the agent would perform if it freezes its policy, and no longer performs updates. 
An \emph{online} $\bar{G}_i$ is an average over the most recent episodic returns obtained by the agent online, computed using an exponential average with weighting 0.1. 
%by time step $i$performance metric evaluates the performance based on the cumulative rewards obtained at each episode while interacting with the environment and adapting its value estimates. Then we convert these episodic cumulative rewards per episode into online return per step~\footnote{Let $G_i$ be cumulative rewards for the $i$-th episode, and $C_t$ be number of episodes completed by time $t$. We define the online return be the exponential average of past episodic returns, that is, $\hat G_t = \sum_{i=1}^{C_t}\beta^{C_t-i} G_i$.}. 

%\footnote{We set threshold to -45.3 for Two-Rooms and 86.6 for Cart-pole. These numbers correspond to -60 and 200 without discounting.}
We define consecutive stable performance as the maximum number of consecutive steps above a performance threshold (60 step for Two-Rooms, 200 for Cart-pole), divided by the total number of steps. If that number is 1, the agent's threshold performance is maximally stable; if it is zero, it is maximally unstable. 
Sample complexity corresponds to the first step $i$ that the agent reaches a performance threshold for $k$ consecutive steps (we use $k = 500$), divided by the total number of steps. Sample efficiency is $1 -$ sample complexity. If the agent has less interference, we expect the agents to learn a good policy faster, though an agent that generalizes aggressively---and has high interference---might have good efficiency, but may not stably remain at this performance. 
% We define stability as the spread between its top performance and bottom performance. Specifically, we use the spread $E[\bar{G} | \bar{G} \ge \text{Percentile}_{0.75}(\bar{G})] - E[\bar{G} | \bar{G} \le \text{Percentile}_{0.25}(\bar{G})]$. 
% To keep the stability measure between 0 and 1, we cap any very large spreads as maximally unstable, with stability = $1 - \min(1,  \text{spread}/AER)$. If the spread is zero, the stability is 1; if the spread is large, relative to performance, then the stability is closer to zero. 
% Finally, we also report stable AER, by taking AER $-$ spread. If the agent has high AER but low stability, it will have lower stable AER. As we will see, this metric is the most reflective, as it both captures magnitudes of returns and stability; an unstable agent with high AER is not a high performing agent. 
Finally, stable AER is defined as $\beta \text{AER} + (1-\beta) E[\bar{G} | \bar{G} \le \text{Percentile}_{0.1}(\bar{G})]$ where $\beta$ represents the risk profile of the algorithm designer. If the agent has high AER but is unstable, then it will have lower stable AER under a small risk-tolerance $\beta$. 
%As we will see, this metric is the most reflective, as it both captures magnitudes of returns and stability; an unstable agent with high AER is not a high performing agent. 

We measure Kendall's Rank-Correlation Coefficient, as in~\citep{jiang2019fantastic}, which reflects if two different measures rank agents similarly. It is agnostic to magnitude or precise numbers: if the interference and performance measure both say agent 1 is better than agent 2, then they are reporting similar outcomes. 
%If this is consistent across all pairs of agents, then they have high rank-correlation coefficient. 
See Appendix \ref{def_rank_coef} for the formula.

\textbf{Results} \ \ We show the correlation coefficients between the two interference measures, ETI and Interference Dispersion, and the above four performance measures, in Figure \ref{fig:kendall}. We expect negative correlations, since high interference should correspond to low performance. The overall conclusion is that ETI and Interference Dispersion are both negatively correlated with all performance measures, providing some evidence for the validity of these interference measures. 
% More specifically, \martha{todo}. 

% \begin{table}[t]
%     \centering
%     \caption{
%         Kendall’s rank coefficient on Cart-pole and Two-Rooms. Interference Dispersion is close to zero in Two-Rooms and the correlation is small, hence, we do not report the numbers. 
%     }
%     \begin{tabular}{l | c c c c | c c c c}
%         \toprule
%         & \multicolumn{4}{|c|}{Cart-Pole} & \multicolumn{4}{c}{Two-Rooms} \\
%         & \multicolumn{2}{|c}{Online} & \multicolumn{2}{c}{Offline} & \multicolumn{1}{|c}{Online} & \multicolumn{1}{c}{Offline} \\
%         & ETI & Dispersion & ETI & Dispersion & ETI  & ETI   \\
%         \midrule
%         Sample efficiency & -0.39 & -0.45 & -0.73  & -0.75 & -0.10 & -0.09 \\
%         Consecutive stable performance & -0.56  & -0.68 & -0.69 & -0.80 & -0.62 & -0.66 \\
%         AER  & -0.52 & -0.64 & -0.32  & -0.58 & -0.37 & -0.56 \\
%         Stable AER ($\alpha = 0.5$) & -0.50 & -0.54 & -0.41  & -0.50 & -0.40 &  -0.50 \\
%         Stable AER ($\alpha = 0$) & -0.42 & -0.44 & -0.32  & -0.36 & -0.32 & -0.27 \\
%         \bottomrule
%     \end{tabular}
%     \label{tab:kendall}
% \end{table}

\begin{figure*}[t]
    \centering
    \includegraphics[width=0.95\textwidth]{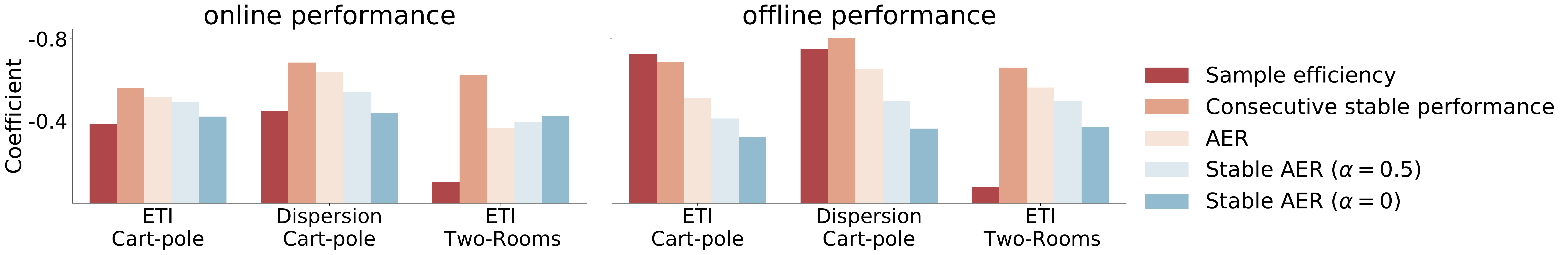}
    \caption
    {
        Kendall’s rank coefficient on Cart-pole and Two-Rooms. Interference Dispersion is close to zero in Two-Rooms and the correlation is small, so we do not report the numbers in the figure. 
    }
    \label{fig:kendall}
    \vspace{-0.5cm}
\end{figure*}

Next, we look at correlations between performance and interference, at a more fine-grained algorithmic level. To do so, we use a scatter plot for each agent, labeled based on the choice of mini-batch size, buffer size and target network update frequency. The y-axis is performance, and the x-axis interference, allowing a visual inspection of correlation between the two as well as general trends for each algorithm choice. We create one scatter plot per environment, per performance measure, and per interference measure; we include only a subset in Figure \ref{fig:scatter} and the remainder in Appendix \ref{more_experiment}. 
We find several conclusions. 
% 1) a larger buffer size decreases interference and improves performance; 
1) The batch size, buffer size and network size did not seem to have a large impact on either interference or performance; instead, target network frequency was the dominating factor. 2) The target network frequency had opposite performance in the two environments: it increases interference in Cart-pole and reduced it in Two-Rooms. In Two-Rooms, target networks improve stable performance at the cost of reducing efficiency.

Besides optimization, another important component of deep reinforcement learning is the function approximator. Therefore, we conduct an experiment to measure interference within a network, in Appendix \ref{within_network}. We find that updates on the last layer result in significantly higher interference than updates in the internal layers. The result motivates future research directions to mitigate interference: (1) strategies to mitigate interference in the last layer, and (2) algorithms to learn representation such that updating the last layer on top of these representation is robust to interference.
%  (instead of learning representation such that the learning itself is robust to interference).

%Based on the previous experiments, we can further use scatter plots to show the relationship between ETI and the performance measure to investigate well-known deep RL techniques. Each dot in the scatter plot represents a hyperparameter configuration over 10 runs. Due to the page limit, we only present the scatter plots for offline and online stable performance. We include the other plots in the Appendix \ref{sec:more_experiment}. Figure \ref{fig:scatter} shows Adam typically outperformed RMSProp, and had a stronger correlation between low interference and high performance. In a few instances, RMSProp resulted in an agent that had high interference but actually still performed well in terms of online performance, because it was able adjust quickly online. We show and discuss this in Appendix \ref{sec:more_experiment}.

\begin{figure*}[ht]
    \vspace{-0.5cm}
    \centering
    \includegraphics[width=0.98\textwidth]{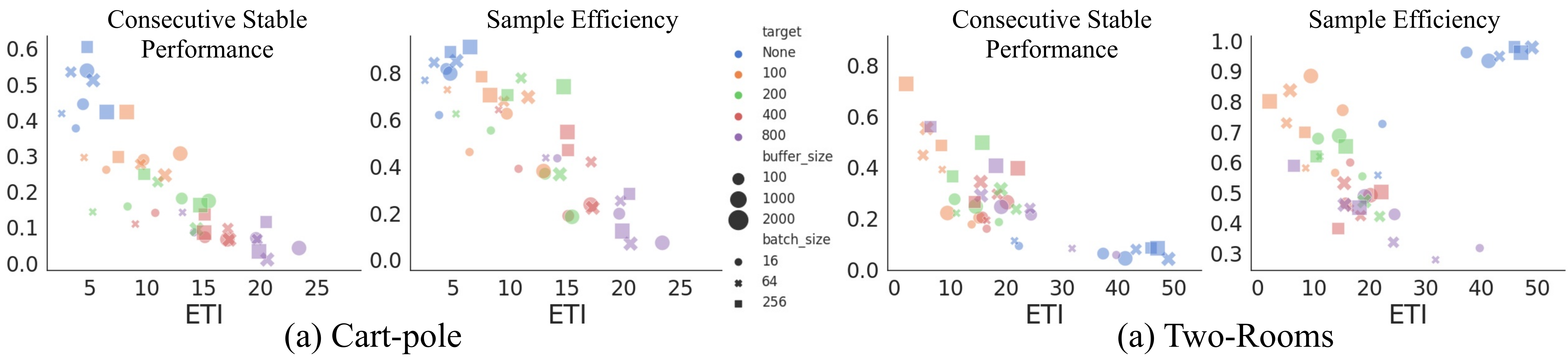}
    \caption
    {
        ETI vs sample efficiency and consecutive stable performance in Cart-pole and Two-Rooms, for a variety of Deep RL agents with the smallest network size of 128x128.
    }
    \label{fig:scatter}
    \vspace{-0.5cm}
\end{figure*}

\section{Approximating the Expected Interference with TD Errors}
\label{sec:approximation}

%In many applications, we are unlikely to have a simulator that we can estimate the true action-values by Monte Carlo rollouts. More formally, this is the problem setting where we only assume access to an online simulation model~\citep{kakade2003sample}. It is more desirable to measure interference based only on the value estimates $Q_{\thetavec}$. 
%In this section, we propose a practical measure of interference based only on the value estimates. 
%% The key question we need to answer is: how do the value estimates give information about the performance? 
%% (Talk about bellman residual minimization, and it is indeed an upper bound on the true performance measure.)
%The goal of this section is to propose some approximation of OR which only use the value estimates. 
%% Moreover, we show that these approximation are indeed upper bounds on OR. As a result, we propose to use the change in these surrogate objectives to approximate the change in OR. 

It can be impractical to compute the EI, and instead we will need to approximate it. One obvious strategy is simply to estimate $Q^{\pi_t}$ from sampled data, and use estimates $\hat{Q}^{\pi_{t-1}} - \hat{Q}^{\pi_t}$ from a set of sampled states, such as sampled start states. The estimate $\hat{Q}^{\pi_{t-1}}$ could be used to initialize $\hat{Q}^{\pi_t}$, so that fewer updates are needed, as likely $\pi_t$ and $\pi_{t-1}$ are not too different. Unfortunately, such a simple strategy, and ideas related to directly estimating this difference, perform poorly (see Appendix \ref{app_othermeasures}). The issue is that approximation of EI with these estimates seems highly sensitive to accuracy, and it is expensive---or impossible if there is insufficient data---to get highly accurate estimates. 

Instead, we want a proxy measure that is more likely to maintain the same sign as EI: reflect performance improvements if the agent got better, and performance degradation otherwise. A natural proxy measure is the Bellman error. The Bellman error reflects if the agent has gotten closer to a fixed point; if it reduced between steps, then this suggests the agent is closer to the fixed point and likely that there is a performance improvement. 
%This does not need to precisely approximate $Q^{\pi_{t-1}} - Q^{\pi_t}$. 
%Rather, it simply needs get the sign the same, and reflect improvement or degradation. 
%
%\subsection{An Approximate Expected Interference using TD Errors}
Fortunately, there is quite a lot of theory relating the Bellman error to $V^{\pi}$. We extend previous results---namely Lemma 4.3 and Theorem 5.3 in \citet{munos2007performance}---to the action-value setting. Though relatively straightforward,  modifications were needed to allow for differences in distribution over action selection from start states, particularly in the redefinition of concentration coefficients used below. We first present a lemma that upper bounds the EI in terms of the Bellman error. All proofs are in Appendix \ref{proof_lemma}.
% This lemma uses a similar strategy to \citet{munos2007performance}, with some adjustments to the proof of \citet{munos2007performance}.
% The proof of Lemma \ref{lemma:componentwise_bound} is in Appendix \ref{proof_lemma}.

\begin{lemma}
    Let $Q \in \RR^{|\States||\Actions|}$ and $\pi$ be a greedy policy with respect to $Q$. Then
    \begin{equation}
        (Q^* - Q^\pi) 
        \leq A |\bellman Q - Q| \hspace{0.8cm}\text{ and }  \hspace{0.8cm} d(Q^* - Q^\pi) 
        \leq d A |\bellman Q - Q|
    \end{equation}
    where $A \defeq [(\eye-\gamma P \Pi^{\pi^*})^\inv + (\eye-\gamma P \Pi^{\pi})^\inv]$, with $\frac{1-\gamma}{2} A$ a stochastic matrix.
    \label{lemma:componentwise_bound}
\end{lemma}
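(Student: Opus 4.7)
The plan is to decompose $Q^* - Q^\pi = (Q^* - Q) + (Q - Q^\pi)$ and bound each term by a linear operator applied to the Bellman residual $\bellman Q - Q$. Once both bounds exist, replacing $\bellman Q - Q$ by $|\bellman Q - Q|$ (legal because the operators $(\eye - \gamma P\Pi^{\pi^*})^{-1}$ and $(\eye - \gamma P\Pi^\pi)^{-1}$ are entrywise nonnegative, being Neumann series in the nonnegative matrix $\gamma P\Pi^{(\cdot)}$ with spectral radius bounded by $\gamma<1$) will give the stated inequality.

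First I would handle the term $Q - Q^\pi$. Since $\pi$ is greedy with respect to $Q$, the Bellman optimality operator and the Bellman evaluation operator for $\pi$ agree on $Q$, i.e.\ $\bellman Q = r + \gamma P \Pi^\pi Q$. Using $Q^\pi = r + \gamma P\Pi^\pi Q^\pi$ and subtracting gives $Q^\pi - Q = \gamma P \Pi^\pi (Q^\pi - Q) + (\bellman Q - Q)$, so rearranging yields
\begin{equation*}
Q - Q^\pi = -(\eye - \gamma P\Pi^\pi)^{-1}(\bellman Q - Q) \leq (\eye - \gamma P\Pi^\pi)^{-1}|\bellman Q - Q|.
\end{equation*}

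Next I would bound $Q^* - Q$. Writing $Q^* - Q = (Q^* - \bellman Q) + (\bellman Q - Q) = (\bellman Q^* - \bellman Q) + (\bellman Q - Q)$ since $Q^* = \bellman Q^*$, the key inequality uses that $\pi$ is greedy with respect to $Q$, hence $\Pi^\pi Q \geq \Pi^{\pi^*} Q$ componentwise, to obtain
\begin{equation*}
\bellman Q^* - \bellman Q = \gamma P\Pi^{\pi^*} Q^* - \gamma P\Pi^\pi Q \leq \gamma P\Pi^{\pi^*}(Q^* - Q).
\end{equation*}
Rearranging gives $(\eye - \gamma P\Pi^{\pi^*})(Q^* - Q) \leq \bellman Q - Q$, and applying the nonnegative operator $(\eye - \gamma P\Pi^{\pi^*})^{-1}$ on the left preserves the inequality, yielding $Q^* - Q \leq (\eye - \gamma P\Pi^{\pi^*})^{-1}|\bellman Q - Q|$. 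Summing the two bounds proves the componentwise inequality $Q^* - Q^\pi \leq A|\bellman Q - Q|$, and the weighted version follows immediately by left-multiplying by the nonnegative row vector $d$.

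Finally I would verify that $\tfrac{1-\gamma}{2}A$ is stochastic. Each $P\Pi^{\pi'}$ is a row-stochastic transition matrix on $\States\times\Actions$, so $(\eye - \gamma P\Pi^{\pi'})^{-1}\onevec = \sum_{k=0}^\infty \gamma^k \onevec = \tfrac{1}{1-\gamma}\onevec$. Hence $A\onevec = \tfrac{2}{1-\gamma}\onevec$, every entry of $A$ is nonnegative, and the scaling is correct.

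The only subtle step is the sign-flipping argument in the bound on $Q^* - Q$: one must invoke the greediness of $\pi$ with respect to $Q$ (not $Q^*$) exactly once, to pass from the nonlinear optimality operator to the linear operator $\gamma P\Pi^{\pi^*}$; everything else is Neumann-series bookkeeping.
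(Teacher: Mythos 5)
Your proof is correct and lands on exactly the paper's intermediate bound $Q^* - Q^\pi \le \left[(\eye-\gamma P\Pi^{\pi^*})^{\inv} - (\eye-\gamma P\Pi^{\pi})^{\inv}\right](\bellman Q - Q)$ using the same two ingredients---greediness of $\pi$ to replace $\Pi^{\pi}Q$ by $\Pi^{\pi^*}Q$, and the resolvent identity $(\eye-\gamma P\Pi^{\pi})(Q^\pi - Q) = \bellman Q - Q$---so it is essentially the paper's argument, merely organized as the split $(Q^*-Q)+(Q-Q^\pi)$ instead of a single telescoping chain of Bellman-operator differences applied to $Q^* - Q^\pi$. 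Your explicit verification that $\tfrac{1-\gamma}{2}A$ is stochastic via $A\onevec = \tfrac{2}{1-\gamma}\onevec$ is a small addition the paper asserts without proof.
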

This bound tells us that we can sample the state-action pairs proportionally to $d A$ to upper bound the OR. Sampling according to $d A$, however, is typically infeasible and here again we need some approximation. We can usually only expect to have a sampled set of transitions, under some behavior policy, resulting in states $s$ in each transition sampled according to some $\mu: \States \rightarrow [0,\infty)$. We can additionally bound this sampling error, by using concentration coefficients. Assume $d(s,a) = \nu(s) /|\Actions|$, where implicitly actions are sample uniformly from $s$. We show the result for any $p \ge 1$ and any policies with non-zero support on all actions in Theorem \ref{theorem1}, with the informal result written here with $p=1$ and uniform policies for simplicity.

\textbf{Theorem 1.} [Informal]
    Let $\nu$ and $\mu$ be probability measures on $\States$. For $\pi$ greedy w.r.t. $Q \in \RR^{|\States||\Actions|}$
\begin{equation*}
    {\textstyle \sum_{s, a}} d(s,a) (Q^*(s,a) - Q^{\pi_t}(s,a)) \leq \tfrac{2}{1-\gamma} [C(\nu,\mu)] {\textstyle \sum_{s, a}} \tfrac{\mu(s)}{|\Actions|} | (\bellman Q_t) (s,a)- Q_t(s,a) |
    .
\end{equation*}
The concentration coefficient $C(\nu,\mu)$ reflects differences in state visitation, starting from $\nu$ versus $\mu$, defined precisely in Appendix \ref{proof_lemma}. We test three practical choices of $\mu$, with $\hat{d}(s,a) = \mu(s) /|\Actions|$. 
%where $C(\nu,\mu)$ is the \emph{discounted future state distribution concentration coefficient} which provides information about how much the future state distributions, starting from $\nu$ and following any policy, might differ from $\mu$. For domains with short horizons or fast mixing, this concentration coefficient can be quite small. \martha{Vincent, is this true?} \vince{I don't think so}.
%For heavily mismatched distributions $\nu$ and $\mu$, however, the concentration coefficient can be trivially large. We discuss some practical choices of $\mu$ in the next section, with $\hat{d}(s,a) = \mu(s) /|\Actions|$. 

If this approximation is relatively good, then $\EE_{(s,a)\sim d}[Q^* - Q^{\pi_{\thetavec_t}}]$ is approximately proportional to $\EE_{(s,a)\sim \hat{d}}[|\bellman Q_{\thetavec_t} - Q_{\thetavec_t}|]$. Recall that EI is $\EE_{(s,a)\sim d}[(Q^* - Q^{\pi_{\thetavec_{t}}}) - (Q^* - Q^{\pi_{\thetavec_{t-1}}})]$. Therefore, a potentially reasonable approximation of EI using the Bellman error is $\EE_{(s,a)\sim \hat{d}}[|\bellman Q_{\thetavec_{t}} - Q_{\thetavec_{t}}|- |\bellman Q_{\thetavec_{t-1}} - Q_{\thetavec_{t-1}}|]$. Even this approximation remains difficult to sample, due to the double sampling problem for Bellman error.
% MARTHAC: no space. anyone who asks why we dont have this, will find it in th appndix
%\footnote{We could in fact directly approximate this using recent insights on Kernel Bellman Errors \citep{feng2019kernel}, though the approximation is still quite expensive to compute. We include results on using this approximation in Appendix \ref{app_othermeasures}.}. 
%
Fortunately, we only need to approximate the difference rather than each term. This can be reasonably well approximated uses differences in TD error. Let $\delta(\thetavec; s, a, r, s') \defeq r + \gamma \max_{a' \in \Actions} Q_\thetavec(s', a') - Q_\thetavec(s,a)$. By the bias-variance decomposition~\citep{antos2008learning}, we can show that
% see https://people.eecs.berkeley.edu/~jrs/189s19/lec/12.pdf
\begin{align}
    \EE[\delta(\thetavec; s,a,r,s')^2] \nonumber
    &= \EE[|\bellman Q_{\thetavec}(s,a) - Q_{\thetavec}(s,a)|^2] + \EE[|r + \max_{a'} Q_{\thetavec}(s', a')  - \bellman Q_{\thetavec}(s,a)|^2].
    \label{eq:bias-variance}
\end{align}
% where $(\bellmanbar Q_{\thetavec})(s,a) = $ is the sample-based Bellman Operator. 
The first term is the desired Bellman error, and the second term the variance of the targets. If the environment is deterministic, then this variance is zero. More generally, the \emph{Approximate EI}, using TD errors, satisfies
\begin{align*}
\text{AEI} &\defeq \EE_{(s,a)\sim \hat{d}}[\delta(\thetavec_{t}; s,a,r,s')^2 - \delta(\thetavec_{t-1}; s,a,r,s')^2] \\
&= \EE_{(s,a)\sim \hat{d}}[|\bellman Q_{\thetavec_{t}}(s,a) - Q_{\thetavec_{t}}(s,a)|^2 -|\bellman Q_{\thetavec_{t-1}}(s,a) - Q_{\thetavec_{t-1}}(s,a)|^2] \\
&+ \EE_{(s,a)\sim \hat{d}}[|r + \max_{a'} Q_{\thetavec_{t}}(s', a')  - \bellman Q_{\thetavec_{t}}(s,a)|^2 - |r + \max_{a'} Q_{\thetavec_{t-1}}(s', a')  - \bellman Q_{\thetavec_{t-1}}(s,a)|^2].
\end{align*}
The second expectation is likely to be small, because the two parameters likely have similar variances. 
 
% Though there are several approximation steps above, we find that the differences in TD errors correlate highly with EI,  across all three different choices for $\mu$ (which use different resampling and reweighting strategies from a buffer). 
% We provide the full table of results in Appendix \ref{app_correlation}, where correlations are typically around 0.5 or higher, with some as high as 0.77 and the lowest at 0.38. We also conduct the same experiments for AEI as in Section 5, and include them in Appendix \ref{app_correlation}. The conclusions are similar under AEI, though with slightly reduced correlations to the performance measures. 
%The AEI, approximated by differences in squared TD errors, provides a reasonable surrogate in expectation for differences in squared Bellman errors. Now it remains to be seen if the AEI, with includes multiple steps of approximation, provides a reasonably proxy for the EI in terms of evaluating interference of an agent. We validate it in Section \ref{sec_exp_approx}. 

\subsection{Choosing a Measure $\mu$ to Approximate the Expected Interference}
% In the previous subsection, we discuss how to approximate EI by the weighted TD errors. 
The quality of the approximation is heavily based on the sampling distribution $\mu$. Ideally, we want a measure $\mu$ such that the concentration coefficient $C(\nu,\mu)$ is small, though this is difficult to ascertain. We only have a stream of observations of the agent interacting with the environment, and further can likely only keep a subset of those in a buffer. Sampling from such a buffer is implicitly sampling from a measure $\mu$, where the data acts like a non-parametric sampling distribution. We can consider multiple strategies for adjusting this sampling distribution, both by choosing what to store in the buffer and by re-weighting samples obtained from the buffer, similarly to importance sampling.

We consider three practical choices. The first, which we call \emph{buffer}, involves simply sampling from the most recent transitions. The AEI is then approximated by averaging the differences in TD errors from uniformly sampled transitions from this buffer. The second strategy, which we call \emph{reservoir}, approximates uniform sampling from all the past transition,  by maintaining a reservoir buffer. The third strategy, which we call \emph{discounted}, involves reweighting transitions in the reservoir buffer. To approximately sampling from the discounted future state distribution, we re-weight each transition by $(1-\gamma)\gamma^t$ where t is the number of steps in that episode. We use re-weighting instead of sampling since we would like the measure to have smaller variance. 

\subsection{Empirical Correlation between EI and AEI}\label{sec:empirical_correlation}
\vspace{-0.1cm}
We empirically demonstrate that the approximations of interference are correlated with true interference in Two-Rooms and Cart-pole. We sample 1000 transitions, which is a relatively small number compared to the state space, and so more reflective of realistic limitations. We  measure Pearson correlation in Figure \ref{fig:correlation}, between EI and AEI per step as well as ETI and Approximate ETI, for two agents. We provide the details in Appendix \ref{app_correlation}.

Though there are several approximation steps above, we find that AEI correlate highly with EI, most clearly in Cart-pole but also in Two-Rooms. The sampling strategies are similarly effective, though reservoir sampling seems to be most effective. We also conduct the same experiments for AEI as in Section 5 are in Appendix \ref{app_correlation}, with similar conclusions, though with slightly reduced correlations to performance measures. 

\begin{figure*}[ht]
    \vspace{-0.3cm}
    \centering
    \includegraphics[width=0.8\textwidth]{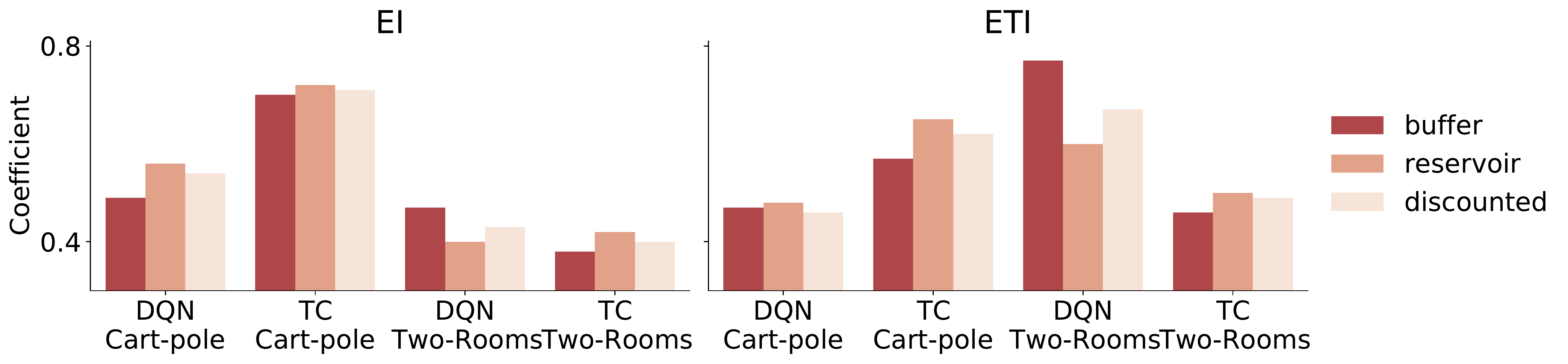}
    \caption
    {
        Correlation coefficients with EI and ETI in two domains. All result have p-value$~<~0.01$.
    }
    \label{fig:correlation}
    \vspace{-0.5cm}
\end{figure*}

\section{Conclusion}

In this paper, we propose a definition of interference for control in RL, and provide a practical approximation using TD errors. We validate the utility of the interference measures by computing the correlation with several performance metric. Using the proposed measures, we provide some insights into interference in deep reinforcement learning algorithms. We highlighted the role of the target network, which we found significantly increased interference and decreased performance in a setting where it was not needed. In another setting, however, the lack of a target network resulted in fast but unstable learning, and we found the opposite conclusion. In both cases, the correlation to interference was clear, for both the true and approximate measures.  
% found that updates on this last layer result in significantly higher interference than updates internal to the network, which motivates us to investigate several representation learning algorithms to mitigate interference in the last layer. 
%This work is a first-step investigation toward understanding interference in deep reinforcement learning algorithms. We hope for this work to spur further algorithmic development into stable deep reinforcement learning algorithms.

% There are still several open questions we could not answer in the paper. For example, the bound we use in section 3.2 might be loose in deterministic environments, so we would need to analyze how tight the bound is. In section 5.4, we only provide qualitative evidence that the measure is correlated with the control performance. However, quantitative evidences are needed. 

This is one of the first papers specifically attempting to define interference for control, and naturally has limitations. One important next step is to expand the set of environments, and agents. In this first small-scale study, we developed a methodology for such experiments, which can be leveraged to extend to new settings. Another important step is to further explore approximations to the true interference, as well as find more clear theoretical reasons why we see that the change in TD-errors performs so well as a proxy. Finally, this paper focuses on deterministic, greedy policies with learned action-values. 
%In this paper, we focus on (value-based) reinforcement learning where the agent chooses the greedy policy w.r.t. its value estimates. However, 
There is some evidence that a mixture of policies might be more robust to interference~\citep{kakade2002approximately,vieillard2019deep}. Stochastic policies naturally fit in our definition of EI, but our approximation may not be as suitable. 
%might not fit in the approximation setting since the upper bound is derived for greedy policies.  
%% be extended to $\epsilon$-soft greedy policy. 
%Understanding interference for stochastic polices, besides greedy policies or $\epsilon$-soft greedy policies, is a future direction. 

\section*{Broader Impact}
This work focuses on characterizing and understanding an RL agent's behavior. It is unlikely to have a direct impact on society although it may guide future research with such an impact. For example, future research following this work may involve developing stable and practical RL algorithms applied to real world problems.

% Authors are required to include a statement of the broader impact of their work, including its ethical aspects and future societal consequences. Authors should discuss both positive and negative outcomes, if any. For instance, authors should discuss a) who may benefit from this research, b) who may be put at disadvantage from this research, c) what are the consequences of failure of the system, and d) whether the task/method leverages biases in the data. If authors believe this is not applicable to them, authors can simply state this.

\bibliographystyle{plainnat}
\bibliography{main.bib}

\clearpage
\appendix
\section{Proofs and Technical Details}
\label{proof_lemma}
The proof of Lemma \ref{lemma1} and Theorem \ref{theorem1} are modified from \citet{munos2007performance}. To begin with, we introduce notation in a matrix form. Define the transition matrix $P \in \RR^{|\States||\Actions| \times |\States|}$ where $P(s,a,s')=\Pr(s,a,s')$. 
Given a policy $\pi$, we define $\Pi^\pi \in \RR^{|\States| \times |\States||\Actions|}$ as 
% \begin{align*}
%     \Pi^{\pi} = 
%     \begin{pmatrix} 
%     & \pi(a_1 | s_1) \dots \pi(a_{|\Actions|} | s_1)&  \\
%     % & \pi(a_1 | s_2)  \dots \pi(a_{|\Actions|} | s_2) & \\
%     & \ddots \\
%     & \ \ \ \ \ \ \ \ \pi(a_1 | s_{|\States|})  \dots \pi(a_{|\Actions|} | s_{|\States|}) \\
%     \end{pmatrix}
% \end{align*}
\begin{align*}
    \Pi^{\pi} = 
    \begin{pmatrix} 
    & \mathbf{\pi(s_1)} & &  \\
    & & \mathbf{\pi(s_2)} & \\
    & & & \ddots & \\
    & & & & \mathbf{\pi(s_{|\States|})} \\
    \end{pmatrix}
\end{align*}
where $\mathbf{\pi(s_i)} = \left[\pi(a_1 | s_i) \dots \pi(a_{|\Actions|} | s_i)\right]$ and all other components are zeros. This notation of $\Pi^\pi$, first introduced in \citet{wang2007dual}, is convenient to use since $\Pi^\pi P$ gives the state to state transition and $P\Pi^\pi$ gives the state-action to state-action transition.

Given an action-value function $Q$, we define the Bellman operator w.r.t. a policy $\pi$ by. 
\begin{align*}
    \bellman^\pi Q = \rvec + \gamma P \Pi^{\pi} Q.
\end{align*}
where $\rvec(s,a) \defeq \sum_{s'\in\States} \Pr(s,a,s')R(s,a,s')$ is the expected immediate reward from state $s$ after taking action $a$. Let $\pi_Q$ denote the greedy policy w.r.t. $Q$, the Bellman optimality operator is defined by
\begin{align*}
    \bellman Q = \rvec + \gamma P \Pi^{\pi_Q} Q.
\end{align*}
Since $\pi_Q$ is the greedy policy, we can show that, for any policy $\pi$,
\begin{align*}
    \bellman Q \geq \bellman^\pi Q.
\end{align*}
Here $\geq$ denotes the component-wise inequality.
Moreover, it is known that the $Q^{\pi_Q}$ is the fixed point of the operator, that is,
\begin{align*}
    \bellman Q^{\pi_Q} = Q^{\pi_Q}.
\end{align*}

\begin{customlemma}{1}\label{lemma1}
    Let $Q \in \RR^{|\States||\Actions|}$ and $\pi$ be a greedy policy with respect to $Q$. Then
    \begin{align}
        & d (Q^* - Q^\pi) \leq  d A |\bellman Q - Q|
    \end{align}
    where $A \defeq [(\eye-\gamma P \Pi^{\pi^*})^\inv + (\eye-\gamma P \Pi^{\pi})^\inv]$, with $\frac{1-\gamma}{2}A$ a stochastic matrix.
\end{customlemma}
\begin{proof}[Proof of Lemma 1]
    Using the fact that $Q^*=\bellman^{\pi^*} Q^*$, $\bellman Q \geq \bellman^{\pi^*} Q$ and $\bellman Q = \bellman^\pi Q$, we can show 
    \begin{align*}
        Q^* - Q^\pi 
        &= \bellman^{\pi^*} Q^* - \bellman^{\pi^*} Q^\pi + \bellman^{\pi^*} Q^\pi - \bellman Q + \bellman Q - \bellman^\pi Q^\pi \\
        &\leq (\bellman^{\pi^*} Q^* - \bellman^{\pi^*} Q^\pi + \bellman^{\pi^*} Q^\pi - \bellman^{\pi^*} Q) + (\bellman^\pi Q - \bellman^\pi Q^\pi) \\
        &\leq \gamma P \Pi^{\pi^*} (Q^* - Q^\pi + Q^\pi - Q) + \gamma P \Pi^{\pi}(Q - Q^\pi) \\
        &= \gamma P \Pi^{\pi^*} (Q^* - Q^\pi) + (\gamma P \Pi^{\pi^*} - \gamma P \Pi^{\pi})(Q^\pi - Q).
    \end{align*}
    Note that $(\eye-\gamma P \Pi^*)$ is invertible, so we have
    \begin{align*}
        Q^* - Q^\pi \leq (\eye-\gamma P \Pi^{\pi^*})^\inv (\gamma P \Pi^{\pi^*} - \gamma P \Pi^{\pi})(Q^\pi - Q).
    \end{align*}
    Moreover, we can derive a component-wise equality between the Bellman residual and $(Q^\pi - Q)$:
    \begin{align*}
        (\eye-\gamma P \Pi^{\pi})(Q^\pi - Q) 
        &= Q^\pi - Q - \gamma P\Pi^\pi Q^\pi + \gamma P\Pi^\pi Q \\
        &= Q^\pi - Q + \rvec + \gamma P\Pi^\pi Q - (\rvec + P\Pi^\pi Q^\pi) \\
        &= Q^\pi - Q + \bellman^\pi Q - \bellman^\pi Q^\pi \\
        &= \bellman^\pi Q - Q = \bellman Q - Q.
    \end{align*}
    % The third inequality follow from $\bellman^\pi Q^\pi = Q^\pi$.
    Therefore, 
    \begin{align*}
        Q^* - Q^\pi 
        &\leq (\eye-\gamma P\Pi^{\pi^*})^\inv (\gamma P\Pi^{\pi^*} - \gamma P\Pi^{\pi})(\eye-\gamma P\Pi^{\pi})^\inv (\bellman Q - Q) \\
        &= (\eye-\gamma P\Pi^{\pi^*})^\inv [(\eye-\gamma P\Pi^{\pi}) - (\eye-\gamma P\Pi^{\pi^*})] (\eye-\gamma P\Pi^{\pi})^\inv (\bellman Q - Q)\\
        &= [(\eye-\gamma P\Pi^*)^\inv - (\eye-\gamma P\Pi^{\pi^{\pi}})^\inv] (\bellman Q - Q)\\
        &\leq [(\eye-\gamma P\Pi^{\pi})^\inv + (\eye-\gamma P\Pi^{\pi^*})^\inv] |\bellman Q - Q|.
    \end{align*}
\end{proof}

\begin{definition}
    Let $b$ be a policy such that $b(\cdot|s)$ has full support over the action space for all states, $\pi_1, ..., \pi_m$ be a sequence of policies, and $\nu$ and $\mu$ be two measures on $\States$. For any integer $m \geq 1$, we define 
    \begin{align*}
        c(m) \defeq \sup_{\pi_1,...\pi_m,s\in\States,a\in\Actions} \frac{(\nu \Pi^{\pi_1} P \dots P \Pi^{\pi_m})(s,a)}{\mu \Pi^b(s,a)}.
    \end{align*}
    Let $c(0) \defeq 1$ and $c(m) \defeq \infty$ if $\nu \Pi^{\pi_1} P \dots \Pi^{\pi_m}$ is not absolutely continuous w.r.t. $\mu \Pi^u$. 
    We define the discounted future state distribution concentration coefficients as
    \begin{align*}
        C(\nu,\mu, b) &\defeq (1-\gamma) \sum_{m=0}^{\infty} \gamma^m c(m).
    \end{align*}
    % we should still define C in term of measures on states. 
\end{definition}

\paragraph{Remark.} 
In practice, we could choose the behavior policy $b$ as an uniform random policy or a $\epsilon$-greedy policy.

\begin{theorem} %  (Theorem 5.3 in \citet{munos2007performance}
    \label{theorem1}
    Let $Q \in \RR^{|\States| \times |\Actions|}$, $\pi$ be a greedy policy with respect to $Q$, $u$ be an uniform policy and $b$ be a behavior policy. Let $\nu$ and $\mu$ be two probability measures on $\States$, and $d = \nu \Pi^u$. Then,
    % \begin{align*}
    %     ||Q^* - \Qpi||_{d,p} \leq \frac{2}{1-\gamma} [C(\nu,\mu)]^{1/p} ||\bellman Q - Q||_{\mu,p},
    % \end{align*}
    % here $||Q||_{d,p}^p\defeq\sum_{s\in\States,a\in\Actions} d(s,a) |Q(s,a)|^{p}$ is the $d$-weighted $l_p$-norm of the vector $Q$. 
    \begin{align*}
        \sum_{s\in\States,a\in\Actions} d(s,a) |Q^*(s,a) - \Qpi(s,a)|^{p} \leq \left[\frac{2}{1-\gamma}\right]^p C(\nu,\mu,b)\sum_{s,a} \mu(s)b(a|s) |(\bellman Q)(s,a) - Q(s,a)|^{p}.
    \end{align*}
\end{theorem}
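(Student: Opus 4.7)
The plan is to start from the pointwise bound of Lemma~\ref{lemma1} and raise it to the $p$-th power, then convert this pointwise bound on vectors indexed by $(s,a)$ into an integrated bound against $d$, and finally re-express the resulting weighting $dA$ in terms of $\mu\Pi^b$ via the concentration coefficients. Concretely, let $v \defeq Q^*-Q^\pi \geq 0$ and $w \defeq |\bellman Q - Q|$. Lemma~\ref{lemma1} gives $v \leq A w$ component-wise, and since both sides are nonnegative, the same inequality holds after raising to the power $p \geq 1$: $v^p \leq (Aw)^p$ component-wise.

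Next I would use the fact asserted in Lemma~\ref{lemma1} that $\tilde A \defeq \tfrac{1-\gamma}{2} A$ is a row-stochastic matrix. Writing $Aw = \tfrac{2}{1-\gamma}\tilde A w$ and applying Jensen's inequality row-by-row (the map $x\mapsto x^p$ is convex for $p\geq 1$ on $[0,\infty)$) yields $(\tilde A w)^p \leq \tilde A (w^p)$, hence
\begin{equation*}
  (Aw)^p \;=\; \left(\tfrac{2}{1-\gamma}\right)^{p}(\tilde A w)^p
  \;\leq\; \left(\tfrac{2}{1-\gamma}\right)^{p}\tilde A (w^p)
  \;=\; \left(\tfrac{2}{1-\gamma}\right)^{p-1} A(w^p).
\end{equation*}
Treating $d$ as a row vector and left-multiplying then gives $\sum_{s,a} d(s,a)\, v(s,a)^p \leq (2/(1-\gamma))^{p-1}\, dA w^p$, where the right-hand side is an inner product of the row vector $dA$ with the nonnegative vector $w^p$.

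The remaining step is to bound $dA$ above by a scalar multiple of the sampling distribution $\mu\Pi^b$. I would expand $A$ as a Neumann series, $A = \sum_{m=0}^\infty \gamma^m (P\Pi^{\pi^*})^m + \sum_{m=0}^\infty \gamma^m (P\Pi^{\pi})^m$, and note that for each $m$,
\begin{equation*}
d\,(P\Pi^{\pi^*})^m \;=\; \nu\,\Pi^u P\,\Pi^{\pi^*} P\cdots P\,\Pi^{\pi^*},
\end{equation*}
which is exactly one of the products $\nu\Pi^{\pi_1} P\cdots P\Pi^{\pi_{m+1}}$ controlled by the definition of $c(\cdot)$. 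Hence $d(P\Pi^{\pi^*})^m(s,a) \leq c(m{+}1)\,\mu\Pi^b(s,a)$, and similarly for $\pi$ in place of $\pi^*$. Summing over $m$ with the geometric weights $\gamma^m$, doubling to account for the two terms in $A$, and applying the definition $C(\nu,\mu,b) = (1-\gamma)\sum_{m\geq 0}\gamma^m c(m)$ contributes a factor of $\tfrac{1}{1-\gamma}\,C(\nu,\mu,b)$ (up to an index shift that gets absorbed into the constants), and multiplying this with the Jensen factor $(2/(1-\gamma))^{p-1}$ produces the claimed $[2/(1-\gamma)]^p\,C(\nu,\mu,b)$ prefactor on $\sum_{s,a} \mu(s)b(a|s)\,w(s,a)^p$.

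The main obstacle, in my view, is the Jensen step: the component-wise $p$-th power of $A w$ does not directly factor through $A$, and we need the row-stochastic normalization $\tilde A = \tfrac{1-\gamma}{2} A$ to legitimately pull the power inside. Tracking the resulting $(\tfrac{2}{1-\gamma})^{p-1}$ factor and marrying it with the factor $\tfrac{2}{1-\gamma}C(\nu,\mu,b)$ arising from the Neumann-series / concentration-coefficient bookkeeping is where the exponent $p$ in $[2/(1-\gamma)]^p$ is produced. The rest is essentially an extension of the standard Munos-style argument to state-action space, with the bookkeeping adapted to the alternating $P$--$\Pi^\pi$ structure introduced by $\Pi^u$.
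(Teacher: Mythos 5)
Your proposal is correct and takes essentially the same route as the paper's proof: the componentwise bound from Lemma~\ref{lemma1}, Jensen's inequality applied through the stochastic matrix $\tfrac{1-\gamma}{2}A$, and a Neumann-series/concentration-coefficient bound giving $dA \leq \tfrac{2}{1-\gamma}C(\nu,\mu,b)\,\mu\Pi^b$ --- you merely distribute the $\tfrac{2}{1-\gamma}$ factors between the Jensen step and the final step differently, arriving at the same $[\tfrac{2}{1-\gamma}]^p$ constant. The index-shift subtlety you flag (the $m$-th Neumann term is of the form $\nu\Pi^u P\Pi^{\pi'}\cdots P\Pi^{\pi'}$ with $m+1$ policies, hence controlled by $c(m+1)$ rather than $c(m)$) is real but is equally glossed over in the paper's own proof, so it does not distinguish your argument from theirs.
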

\begin{proof}[Proof of Theorem 1]
    We can write 
    \begin{align*}
        Q^* - Q^\pi \leq A |\bellman Q - Q|
    \end{align*}
    where $A=\left[(\eye-\gamma P \Pi^{\pi^*})^\inv + (\eye-\gamma P \Pi^{\pi})^\inv\right]$ and $\frac{1-\gamma}{2} A $ is a stochastic matrix. Then,
    % Then,
    % \begin{align*}
    %     ||Q^* - Q^\pi||_{p,\nu}^{p} 
    %     &\leq \sum_{s\in\States,a\in\Actions} \nu(s)u(a) [\frac{2}{1-\gamma} A |\bellman Q - Q|]^p(s,a) \\
    %     &\leq [\frac{2}{1-\gamma}]^p \sum_{s\in\States,a\in\Actions} \nu(s)u(a) [A |\bellman Q - Q|^p](s,a) \\
    %     &\leq [\frac{2}{1-\gamma}]^p C_1(\nu,\mu) \sum_{s\in\States,a\in\Actions} \mu(s) u(a) |\bellman Q - Q|^p(s,a) \\
    %     &= [\frac{2}{1-\gamma}]^p C_1(\nu,\mu) ||\bellman Q - Q||_{p,\mu}^p
    % \end{align*}
    % The second inequality follows from $\nu\Pi^{u} A \leq (1-\gamma) \sum_{m=0}^{\infty} \gamma^m c(m) \mu \Pi^{u} = C(\nu, \mu) \mu \Pi^{u}$.
    \begin{align*}
        \sum_{s\in\States,a\in\Actions} d(s,a) |Q^*(s,a) - \Qpi(s,a)|^{p}
        &\leq \left[\frac{2}{1-\gamma}\right]^p \sum_{s,a} d(s,a) \left[\frac{1-\gamma}{2} A |\bellman Q - Q|\right]^p(s,a) \\
        &\leq \left[\frac{2}{1-\gamma}\right]^p \sum_{s,a} d(s,a) \left[\frac{1-\gamma}{2}A |\bellman Q - Q|^p\right](s,a) \\
        &\leq \left[\frac{2}{1-\gamma}\right]^p C(\nu,\mu,b) \sum_{s,a} \mu(s) b(a|s) |\bellman Q - Q|^p(s,a)
    \end{align*}
    The second inequality follows from Jensen's inequality. The third inequality follows from $d (\frac{1-\gamma}{2} A) \leq (1-\gamma) \sum_{m=0}^{\infty} \gamma^m c(m) \mu \Pi^{b} = C(\nu, \mu,b) \mu \Pi^{b}$.
\end{proof}

\section{Experimental Details}\label{app_expdetails}
\subsection{Experiment set-up}
We experiment with two environments: (1) Two-Rooms: we set the maximum steps per episode to 200, and the number of training steps to 90k, and (2) Cart-pole from OpenAI gym (\url{https://gym.openai.com/}): We set the maximum steps per episode to 500, and the number of training steps to 20k. We use a discounting factor $\gamma=0.99$ in both environments.

The environment Two-Rooms consists of two rooms with different start and goal states. In the first room the agent should navigate up and to the right, and in the second room down and to the left. The input state contains the xy position of the agent, which is in $[0,1]^2$ for both rooms, and which room the agent is in, which is in $\{0,1\}$. 
% We include a figure of the environment.

For all experiments, we use a two-layer neural network with ReLU activation, and use He intialization to initialize the neural networks.

For the experiments in Section 5, we generate a set of hyper-parameter $\Theta$ by choosing each parameter in the set:
\begin{itemize}
    \item buffer size $\in \{100, 1000, 2000\}$
    \item batch size $\in \{16, 64, 256\}$
    \item Hidden size $\in \{128, 256, 512\}$
    % \item Optimizer $\in \{$Adam, RMSProp$\}$
    \item Target network update frequency $\in \{0, 100, 200, 400, 800\}$ where zero means no target network is used
\end{itemize}

For tile coding, we use 4 tiles and 16 tilings with a constant step size. The step size are searched in the set $\{0.2, 0.1, 0.05, 0.025\}$ by the best online AER. For SR-NN in Appendix \ref{within_network}, we fix $\beta = 0.1$ and use a grid search for the key parameter: $\lambda_{SKL} \in \{0.01, 0.001, 0.0001\}$.
For Section \ref{sec:empirical_correlation} and Appendix \ref{app_othermeasures}, we choose a standard neural network with hidden size of 128, batch size of 64, buffer size of 1000 and no target network. 
% The learning rate are searched in the set $\{0.001, 0.0003, 0.0001\}$.

\subsection{Kendall’s rank-correlation coefficient}
\label{def_rank_coef}
Inspired from~\citep{jiang2019fantastic}, we use Kendall's rank-correlation coefficient~\citep{kendall1938new} to check the correlation between a performance metric and a statistics of our interference measures. 
% The statistics, Expected Tail Interference (ETI) and Interference Dispersion (ID), are described in Section \ref{sec:interference_over_time}. 
Let $\Theta$ be a set of hyperparameters and $K \defeq \cup_{\thetavec \in \Theta} \{(g(\thetavec), s(\thetavec))\}$ where $s(\theta)$ is a statistics of  our interference measures and $g(\theta)$ is a performance measures corresponding to a  hyperparameter configuration $\theta$. Kendall’s rank coefficient $\tau$ is defined as
\begin{align*}
    \tau \defeq \frac{1}{|K||K-1|} \sum_{(g_1,s_1)\in K} \sum_{(g_2,s_2)\in K /{(g_1,s_1)}} \text{sign}(g_1 - g_2) \text{sign}(s_1 - s_2).
\end{align*}
The coefficient varies between $-1$ and $1$. 

\section{Additional Experiments of Section 5}

\subsection{Correlation between interference and performance}
\label{more_experiment}

We show the scatter plots for Cart-pole in Figure \ref{eti_cartpole} and \ref{id_cartpole}, and for Two-Rooms in Figure \ref{eti_tworooms}.
In Two-Rooms, we are interested in the performance when the agent has trained on room 2 for a long time. Therefore, we measure interference and performance for the second half of training on room 2. 

The results show show relatively consistent correlation between ETI and performance measures. The notable exception is in Two-Rooms, when there are no target networks. The agents have high interference, but also high AER, for all three variants of AER. The consecutive stable performance and sample efficiency plots sheds some light on why this occurs. Target networks slow learning in this environment, but then maintain stable performance above the threshold for consecutive stable performance. These same agents, though, look worse in terms of AER, than the No Target Network agents, which oscillate more but manage to get to higher performance. The plots are skewed by the fact that, with Target Networks, learning is not quite done when we start measuring interference, in that second half of Room 2. Consequently, though the agent is above the threshold of acceptable performance, it is still on the rise. The lower 10\% of the returns is much lower for some of the agents with target networks, than those without, because of this fact. If we allowed the agents to learn for even longer, this point that drop low on the AER plots would likely move up higher, and we would see a clear trend from the cluster of points near zero interference and high performance, the cluster of points with high interference (those without target networks).  
%We find that using target networks results in inefficiency. As a result, some algorithms using target networks might have a lower average AER than algorithms without target networks, although they have smaller ETI. 

% TODO: add RMSProp?
% We show the correlation plots for RMSProp and Adam. We can see that Adam generally have smaller ETI and higher offline stable return than RMSProp. However, it is surprising that RMSProp has good online stable return even though it has large ETI. 
% We hypothesize the reason is due to momentum. Without momentum, even if the current update causes large interference at the current time step. It might be able to recover from next step, so its impact on online performance is less (it still impacts offline performance).
%  When using momentum, a bad update affects future updates. As a result, when interference occurs, it is problematic and and harms the online performance. 

\subsection{Measuring interference within a network}
\label{within_network}

In deep reinforcement learning, neural networks are used as the function approximatior. We want to understand how much interference is due to the internal layers and the last layer.
% Do all layers interfere equally? (see Are All Layers Created Equal?).
The last layer typically does not have an activation; hence we can view a value function as a \emph{two-part approximation} with a representation function and a linear weight $Q_{\weights, \paravec}(s, a) \defeq \phivec_\paravec(s,a)^\top \weights$, where $\weights \in \RR^\wdim$ is the weights in the last layer and $\phivec_\paravec : \States \times \Actions \to \RR^\wdim$ is the \emph{representation} learned by the network with weights $\paravec$, composed of all the hidden layers in the network. The function $\phivec_\paravec(s,a)$ corresponds to the last layer in the network, with $\paravec$ the weights of the network. 

To study interference separately within the network, we use the stochastic block coordinate descent (SBCD) Q-learning to update $\paravec$ and $\weights$ seperately:
\begin{align*}
    \text{Representation Learning Network (RLN) updates: } & \paravec_{t+1} = \paravec_{t} - \alpha_1 \sum_{i=1}^{B} \nabla_{\paravec_{t}} L(\paravec_{t}, \wvec_t;s_i,a_i)\\
    \text{Value Learning Network updates (VLN) updates: } & \weights_{t+1} = \weights_{t} - \alpha_2 \sum_{i=1}^{B} \nabla_{\weights_t} L(\paravec_{t+1}, \weights_t;s_i,a_i).
\end{align*}
where $B$ is the mini-batch size, and $\alpha_1$ and $\alpha_2$ are learning rate. 

We measure interference for RLN and VLN updates separately at every step, and report the ETI and ETI for approximations in Table \ref{tab:seperate}. We can observe that VLN has much higher ETI than RLN. The result suggests that updates on the last layer result in significantly higher interference than updates on the internal layers, even when we decrease the learning rate for VLN. 

We include a baseline SR-NN~\citep{liu2019utility}, which learns a sparse representation $\phivec_\paravec$, to see how representation learning can reduce interference in VLN. 
% This findings suggests that learning representations is a promising direction for mitigating interference.
% such that updating the value function on top of these representation is robust to interference. 
SR-NN uses the distributional regularizers to learn sparse representation in neural networks: 
\begin{align*}
    \min_{\paravec} \sum_{i=1}^B L(\paravec, \weights; s_i, a_i) + \lambda_{SKL} \sum_{j=1}^{d} SKL(\bar \phivec_{\paravec,j})
\end{align*}
where $SKL$ is a regularization on the expected activation, i.e., $\bar \phivec_{\paravec,j}=\sum_{i=1}^{B} \phivec_{\paravec,j}(s_i,a_i)$ and $\phivec_{\paravec,j}$ denote the $j$-th component of $\phivec_{\paravec}$. Table \ref{tab:seperate} shows that SBCD with SR-NN has a lower ETI for VLN.

% The result motivates us to focus on (1) learning representation such that updating the value function on top of these representation is robust to interference (instead of learning representation such that the learning itself is robust to interference), and (2) strategies to mitigate interference in the linear value function, for example, by optimizing toward stable targets~\citep{riedmiller2005neural}, constraining policy updates~\citep{kakade2002approximately}, or simply decrease the learning rate for updating the value fucntion. 

\begin{table}[h]
    \centering
    \caption{ETI for updating VLN and RLN on Cart-pole. We report the control performance as a baseline to see the magnitude of interference. Bold numbers show that ETI for VLN is significantly larger than ETI for RLN. The number are averaged over 10 runs with one standard error.}
    \begin{tabular}{l c c | c }
        \toprule
        & ETI for RLN & ETI for VLN & Control Performance\\
        \midrule
        SBCD Q-learning & 5.05 $\pm$ 0.27 & \textbf{18.23} $\pm$ 3.58 & 84.45 $\pm$ 0.76 \\
        SBCD Q-learning (smaller $\alpha_2$) & 3.83 $\pm$ 0.26 & \textbf{14.05} $\pm$ 1.59 & 86.85 $\pm$ 0.39 \\
        SBCD with SR-NN & 3.48 $\pm$ 0.21 & \textbf{5.09} $\pm$ 0.32 & 89.52 $\pm$ 0.41 \\
        % Linear Q-learning with TC  & - & - & -  \\
        \bottomrule
    \end{tabular}
    \label{tab:seperate}
\end{table}

% The result motivates us to focus on (1) learning representation such that updating the value function on top of these representation is robust to interference (instead of learning representation such that the learning itself is robust to interference), and (2) strategies to mitigate interference in the linear value function, for example, by optimizing toward stable targets~\citep{riedmiller2005neural}, constraining policy updates~\citep{kakade2002approximately}, or simply decrease the learning rate for updating the value fucntion. 

\section{Additional Experiments of Section 6} \label{app_correlation}

\subsection{Empirical comparison of approximation strategies}
\label{app_othermeasures}

Besides approximate EI using TD errors, we test two approximation baselines. First, we could in fact directly approximate the change in Bellman error using recent insights on Kernel Bellman Errors \citep{feng2019kernel}, though the approximation is still quite expensive to compute. For example, if we use $M$ transitions to evaluate TD errors, which requires $O(M)$ computation, evaluating Kernel loss requires $O(M^2)$ computation. Hence, we use only 100 transitions (from a reservoir buffer) to evaluate the approximation.
Second, we can estimate $\hat Q^{\pi_{\thetavec_{t}}} \approx Q^{\pi_{\thetavec_{t}}}$ from sampled data in the buffer using off-policy policy evaluation (OPE), and directly approximate EI $\approx \EE_d[\hat Q^{\pi_{\thetavec_{t}}}(S,A) - \hat Q^{\pi_{\thetavec_{t+1}}}(S,A)]$ from a set of sampled state-action pairs from $d$. At each evaluation step, we run off-policy SARSA algorithm for 10 epochs over the data stored in a vanilla replay buffer.  % add citation ~\citep{sutton1998reinforcement}
% Note that for the second method, we assume we can sample from $d$. 
We call the first baseline \emph{kernel}, and the second baseline \emph{OPE}. 

% Finally, we can directly approximate EI using the lemma.
% \begin{lemma}
%     Given two value functions $Q_t$ and $Q_{t-1} \in \RR^{|\States||\Actions|}$. Let $\pi_t$ and $\pi_{t-1}$ be greedy policies with respect to $Q_t$ and $Q_{t-1}$ respectively. Then
%     \begin{align}
%         & d (Q^{\pi_{t-1}} - Q^{\pi_t}) = d(Q_{t-1} - Q_{t}) + d A [(\bellman Q_{t-1} - Q_{t-1}) - (\bellman Q_t - Q_t)]
%     \end{align}
%     where $A=\frac{1-\gamma}{2}[(\eye-\gamma P \Pi^{\pi_{t-1}})^\inv + (\eye-\gamma P \Pi^{\pi_t})^\inv]$ is a stochastic matrix.
% \end{lemma}

During training, we compute $\text{AEI}_i$ and the true measure $\text{EI}_i$ every $k$ steps. We collect all data points $(X_i, Y_i)$ from the second half of training steps, over $10$ runs, and report Pearson correlation coefficient between AEI and EI. Formally, Pearson correlation coefficient between two sets of measures $X$ and $Y$ is defined as
\[r_{X, Y} = \frac{\sum_i (X_i - \bar X)(Y_i - \bar Y)}{\sqrt{\sum_i (X_i - \bar X)^2} \sqrt{\sum_i (Y_i - \bar Y)^2}}.\] 
% We also report the Pearson correlation coefficient between Approximate $\text{ETI}$ and $\text{ETI}$.
We show the results in Figure \ref{fig:aei_correlation}. The results suggest that change in TD errors has higher correlation coefficients than other approximation baselines.

% Based on the performance difference lemma~\citep{kakade2003sample}, we can show that the EI equals to expected advantages, i.e., $\EE_{sa \sim d}[Q^{\pi_{\thetavec_{t}}}(s,a) - Q^{\pi_{\thetavec_{t+1}}}(s,a)] = \frac{1}{1-\gamma} \EE_{sa \sim d_{\pi_{\thetavec_{t+1},d}}\pi_{\thetavec_{t+1}}}[A^\pi(s,a)]$ where $d_{\pi,d}(s) = (1-\gamma) \sum_{t=1}^{\infty} \gamma^{t-1} \Pr(S_t=s; \pi, s_0a \sim d)$ is the $\gamma$-discounted future state distribution (TODO: check if this is a valid probability distribution) and $A^\pi(s,a)=Q^\pi(s,a)-V^\pi(s)$. However, in practice, we don’t know the true advantage function and we can't sample from $d_{\pi_{\thetavec_{t+1}},d}$ without a simulator. We need to estimate the advantages by rollouts or using expected Sarsa with a batch of data. 

\begin{figure*}[ht]
    \centering
    \includegraphics[width=\textwidth]{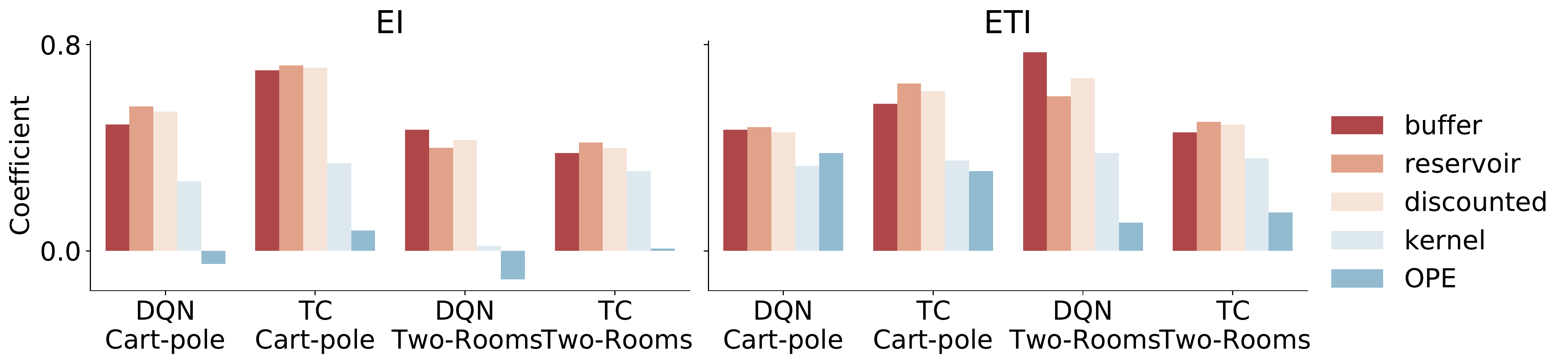}
    \caption
    {
        Correlation coefficients with AEI and Approximate ETI in two domains. 
    }
    \label{fig:aei_correlation}
\end{figure*}

\subsection{Results using AEI for Deep RL}

In this section, we present the same experiments as in Section \ref{sec:correlation} and Appendix \ref{within_network}, with the Approximate EI. We can draw similar conclusions, though with slightly reduced correlations to performance measures. Figure \ref{fig:aei_kendall} shows that Approximate ETI and ID are negatively correlated with several performance measures. Table \ref{tab:separate_aei} shows that VLN has higher Approximate ETI than RLN.

\begin{figure*}[ht]
    \centering
    \includegraphics[width=\textwidth]{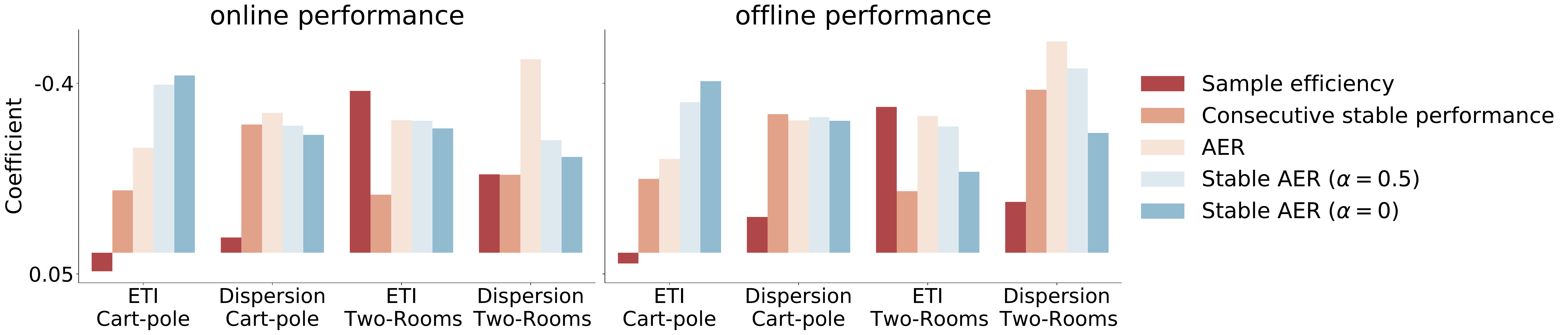}
    \caption
    {
        Kendall’s rank coefficient on Cart-pole and Two-Rooms.
    }
    \label{fig:aei_kendall}
\end{figure*}

\begin{table}[ht]
    \centering
    \begin{tabular}{l c c | c}
        \toprule
        & ETI for RLN & ETI for VLN & Performance \\
        \midrule
        SBCD Q-learning & 0.22 $\pm$ 0.02 & \textbf{1.02} $\pm$ 0.29 & 84.45 $\pm$ 0.76 \\
        SBCD Q-learning (smaller $\alpha_2$) & 0.08 $\pm$ 0.01 & \textbf{0.29} $\pm$ 0.04 & 86.85 $\pm$ 0.39 \\
        SBCD with SR-NN & \textbf{0.12} $\pm$ 0.01 & 0.08 $\pm$ 0.01 & 89.52 $\pm$ 0.41 \\
        % Linear Q-learning with TC  & - & - & - \\
        \bottomrule
    \end{tabular}
    \vspace{0.5cm}
    \caption{Approximate ETI for updating VLN and RLN separately. Bold numbers show that ETI for VLN is significantly larger than ETI for RLN for SBCD Q-learning. For SBCD with SR-NN, ETI for RLN is larger than ETI for VLN. The number are averaged over 10 runs with one standard error.}
    \label{tab:separate_aei}
\end{table}

\begin{figure*}[ht]
    \centering
    \begin{subfigure}[t]{0.48\textwidth}
        \centering
        \includegraphics[width=\textwidth]{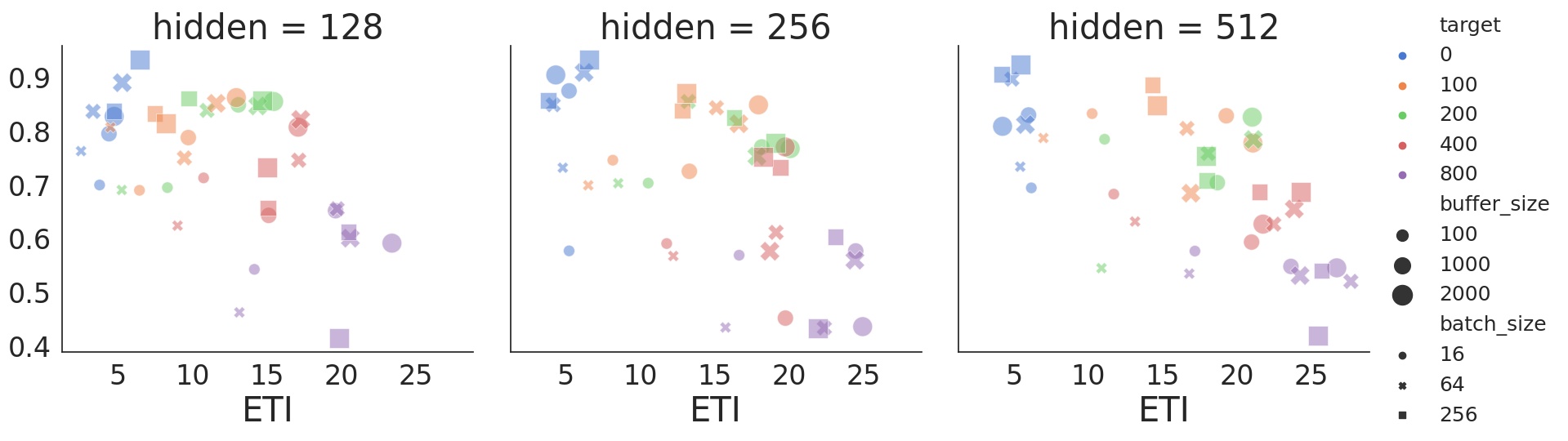}
        \caption{Online sample efficiency.}
    \end{subfigure}
    \begin{subfigure}[t]{0.48\textwidth}
        \centering
        \includegraphics[width=\textwidth]{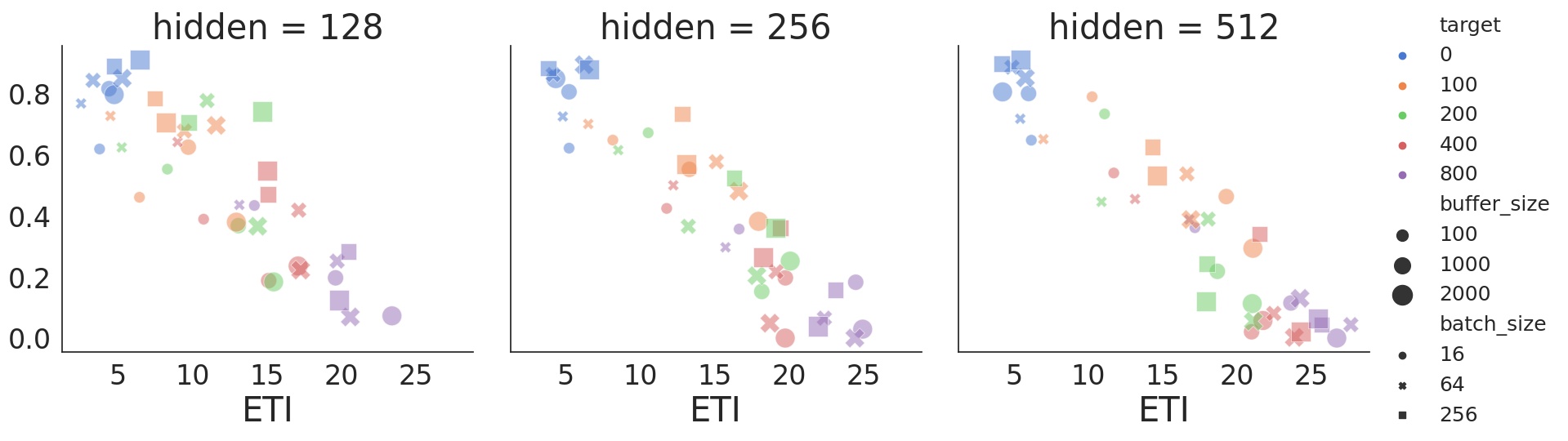}
        \caption{Offline sample efficiency.}
    \end{subfigure}
    \begin{subfigure}[t]{0.48\textwidth}
        \centering
        \includegraphics[width=\textwidth]{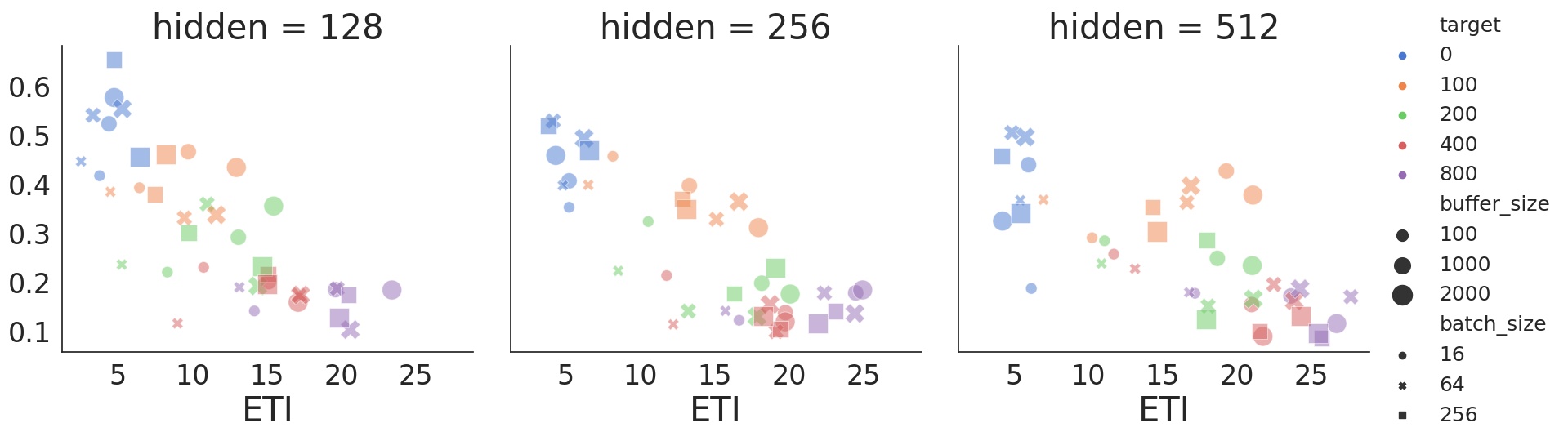}
        \caption{Online consecutive stable performance.}
    \end{subfigure}
    \begin{subfigure}[t]{0.48\textwidth}
        \centering
        \includegraphics[width=\textwidth]{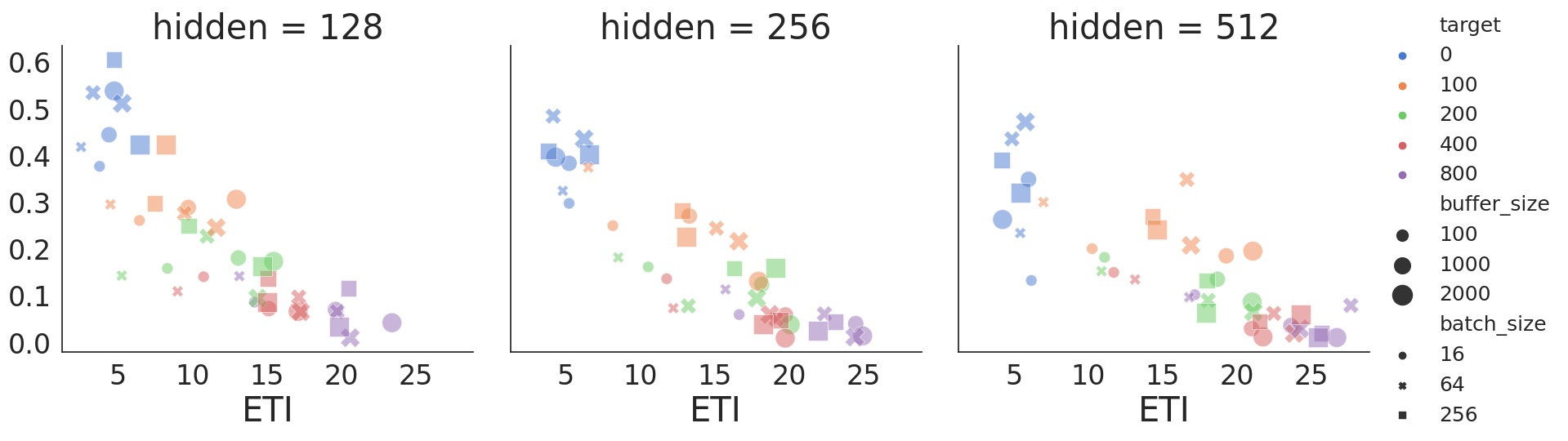}
        \caption{Offline consecutive stable performance.}
    \end{subfigure}
    \begin{subfigure}[t]{0.48\textwidth}
        \centering
        \includegraphics[width=\textwidth]{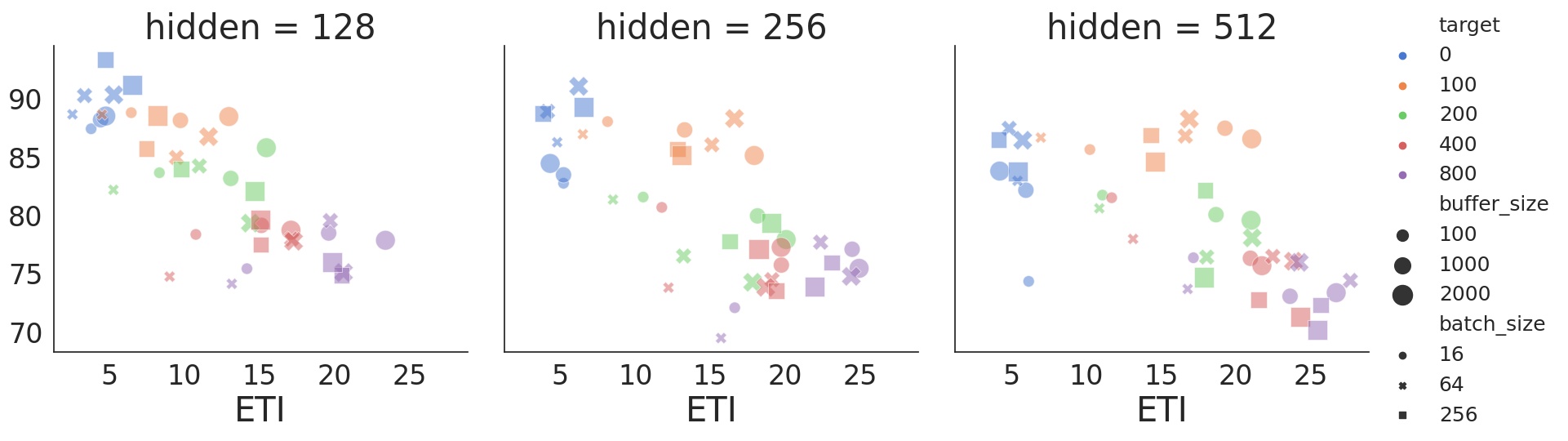}
        \caption{Online AER.}
    \end{subfigure}
    \begin{subfigure}[t]{0.48\textwidth}
        \centering
        \includegraphics[width=\textwidth]{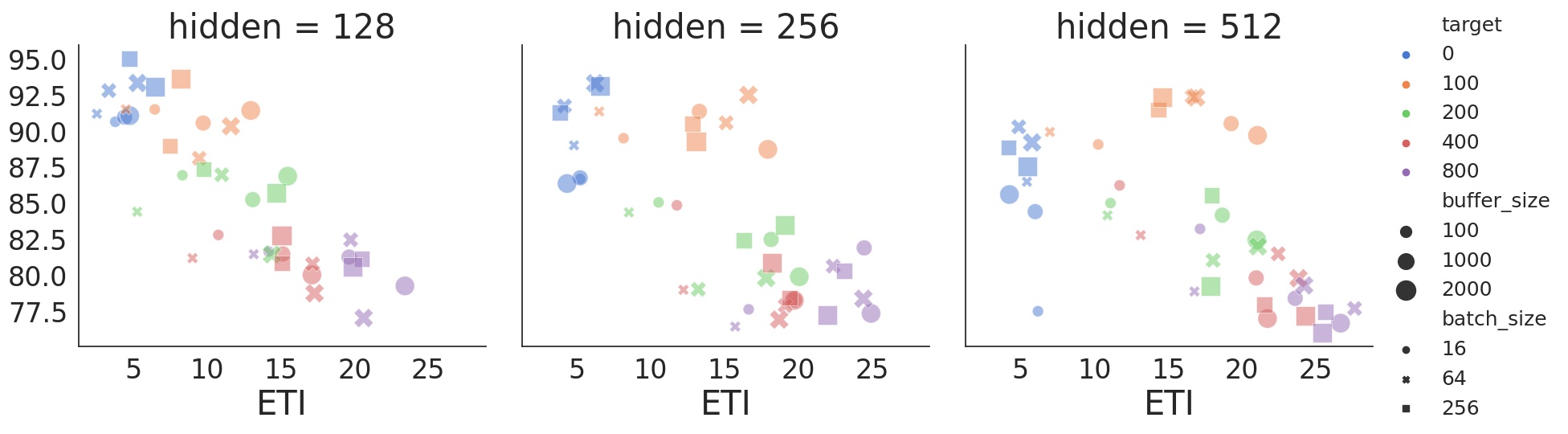}
        \caption{Offline AER.}
    \end{subfigure}
    \begin{subfigure}[t]{0.48\textwidth}
        \centering
        \includegraphics[width=\textwidth]{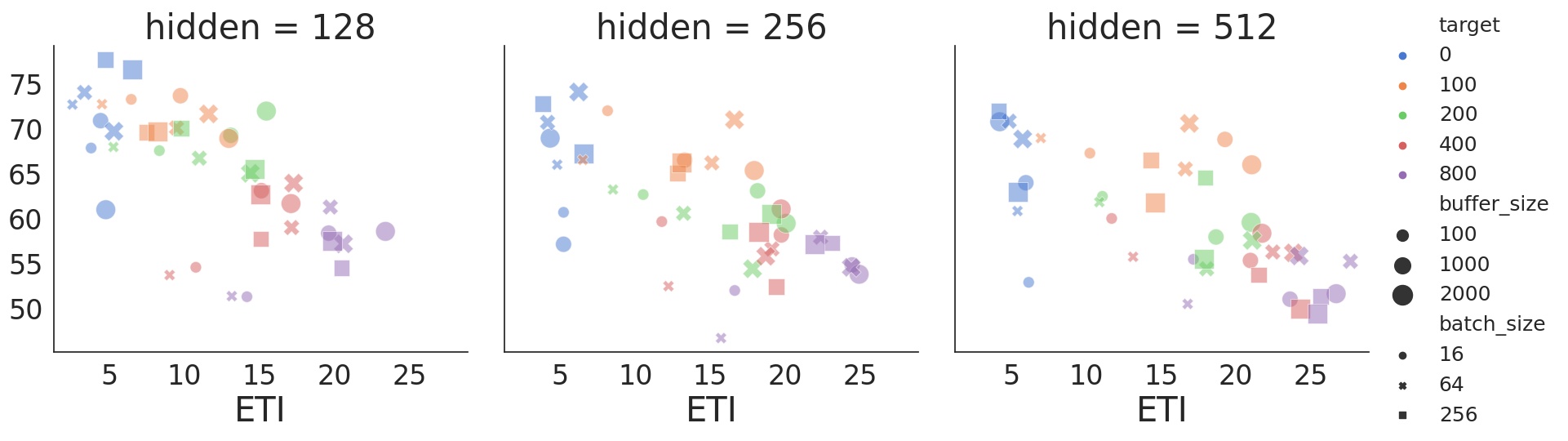}
        \caption{Online stable AER ($\beta=0.5$).}
    \end{subfigure}
    \begin{subfigure}[t]{0.48\textwidth}
        \centering
        \includegraphics[width=\textwidth]{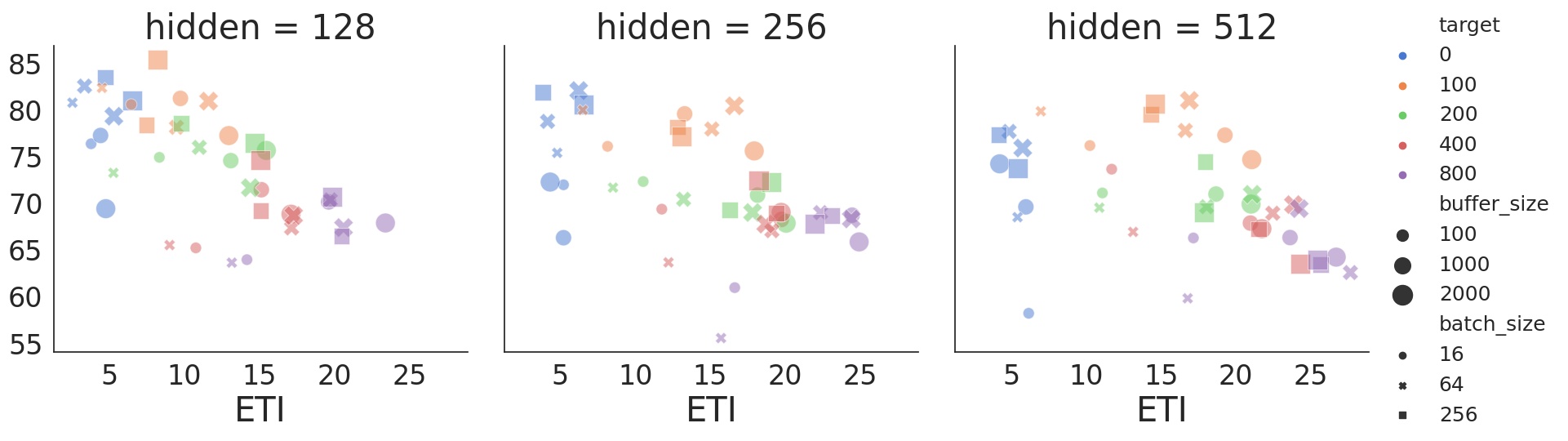}
        \caption{Offline stable AER ($\beta=0.5$).}
    \end{subfigure}
    \begin{subfigure}[t]{0.48\textwidth}
        \centering
        \includegraphics[width=\textwidth]{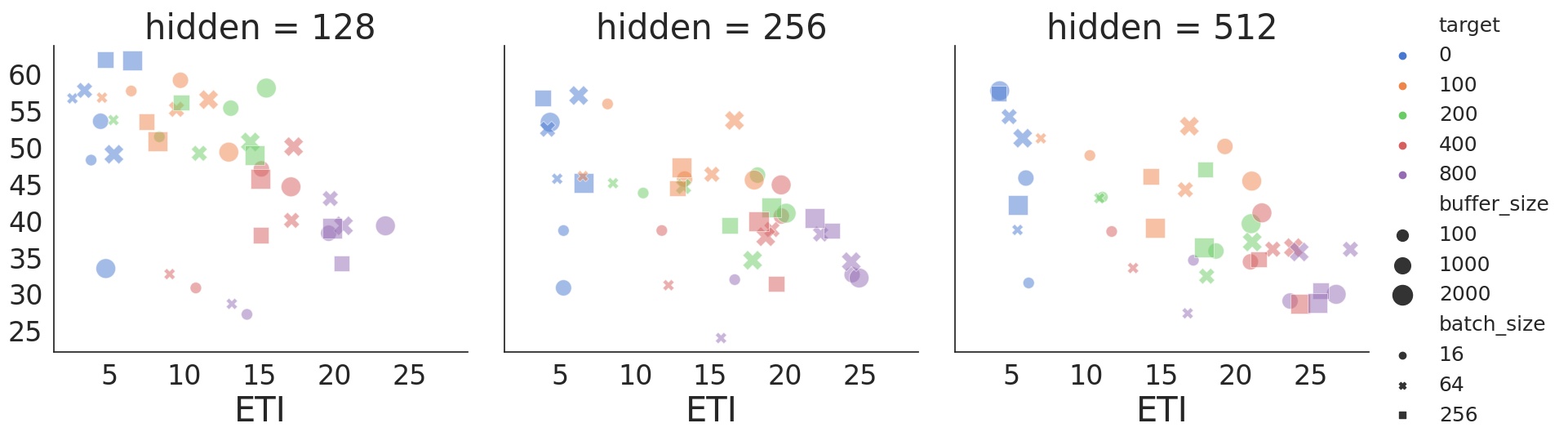}
        \caption{Online stable AER ($\beta=0$).}
    \end{subfigure}
    \begin{subfigure}[t]{0.48\textwidth}
        \centering
        \includegraphics[width=\textwidth]{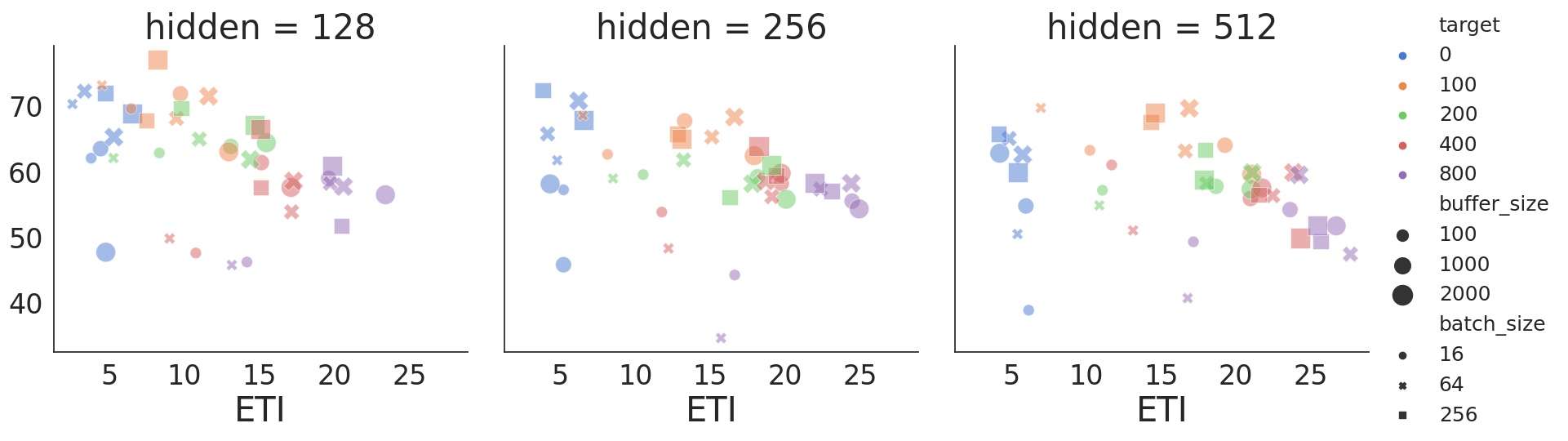}
        \caption{Offline stable AER ($\beta=0$).}
    \end{subfigure}
    \caption
    {
        ETI vs performance measures in Cart-pole, for a variety of Deep RL agents.
    }
    \label{eti_cartpole}
\end{figure*}

\begin{figure*}[ht]
    \centering
    \begin{subfigure}[t]{0.48\textwidth}
        \centering
        \includegraphics[width=\textwidth]{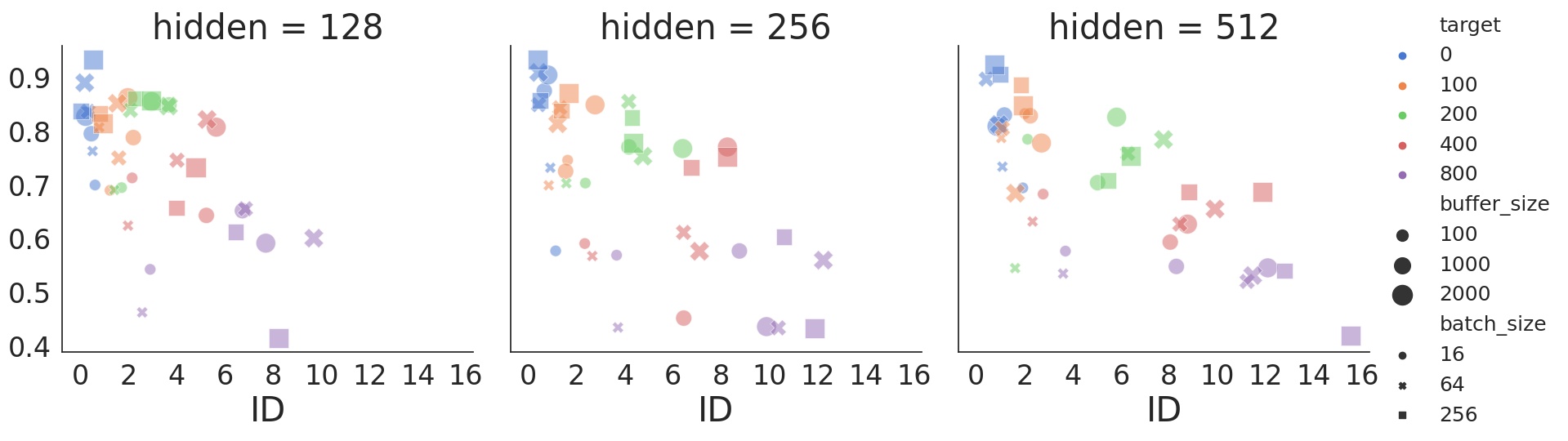}
        \caption{Online sample efficiency.}
    \end{subfigure}
    \begin{subfigure}[t]{0.48\textwidth}
        \centering
        \includegraphics[width=\textwidth]{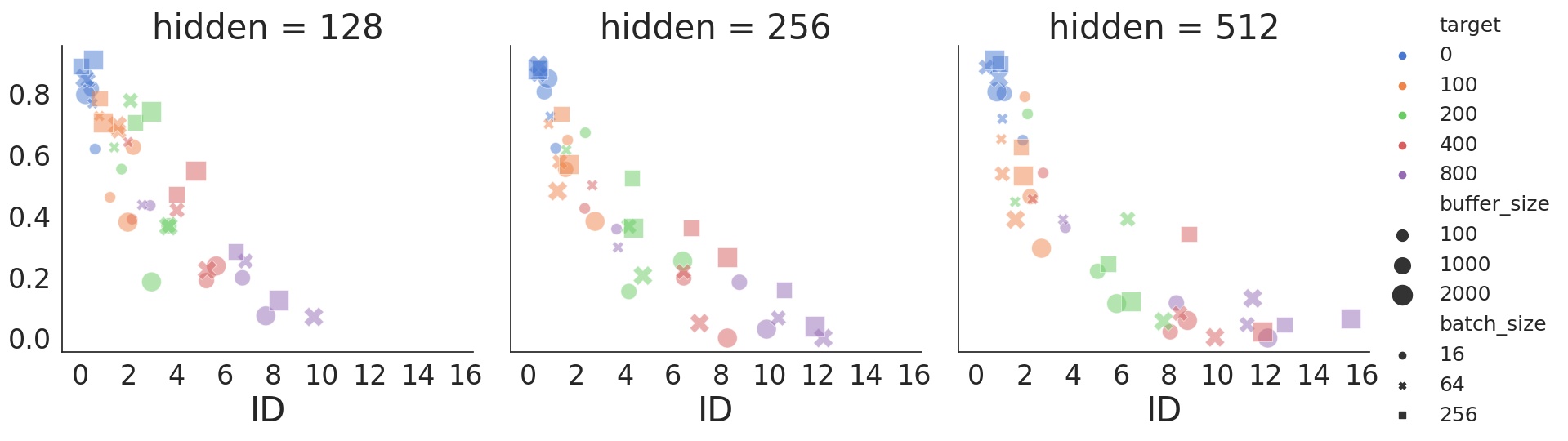}
        \caption{Offline sample efficiency.}
    \end{subfigure}
    \begin{subfigure}[t]{0.48\textwidth}
        \centering
        \includegraphics[width=\textwidth]{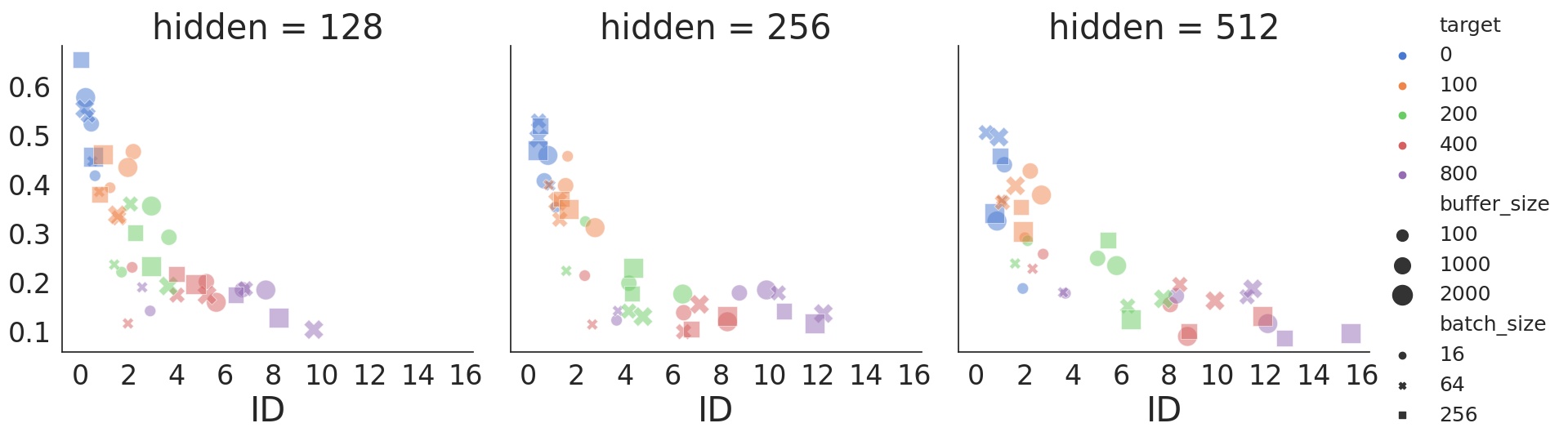}
        \caption{Online consecutive stable performance.}
    \end{subfigure}
    \begin{subfigure}[t]{0.48\textwidth}
        \centering
        \includegraphics[width=\textwidth]{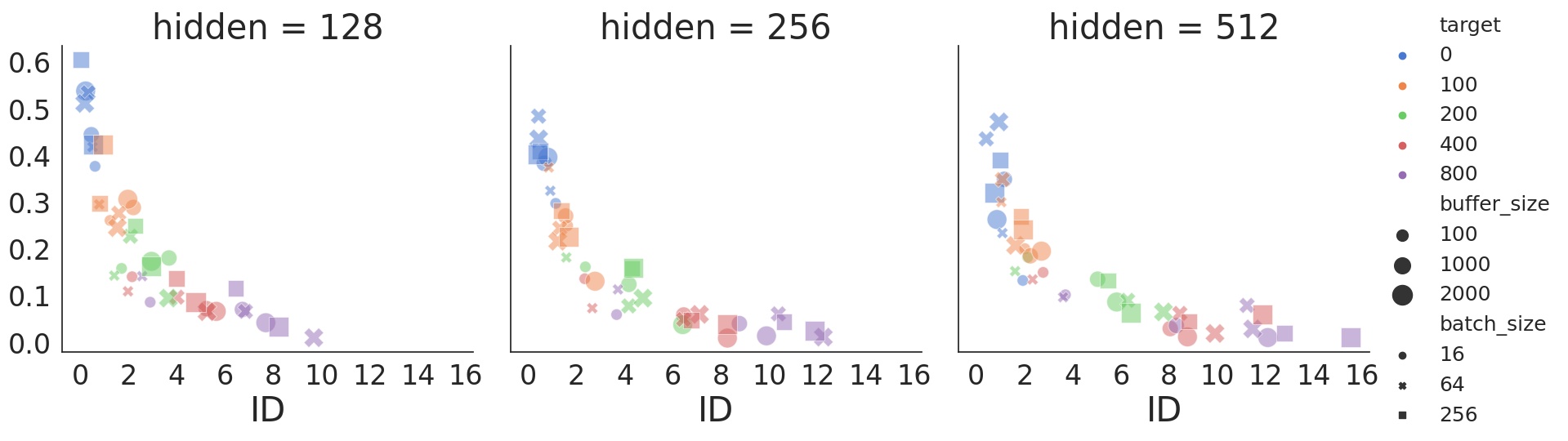}
        \caption{Offline consecutive stable performance.}
    \end{subfigure}
    \begin{subfigure}[t]{0.48\textwidth}
        \centering
        \includegraphics[width=\textwidth]{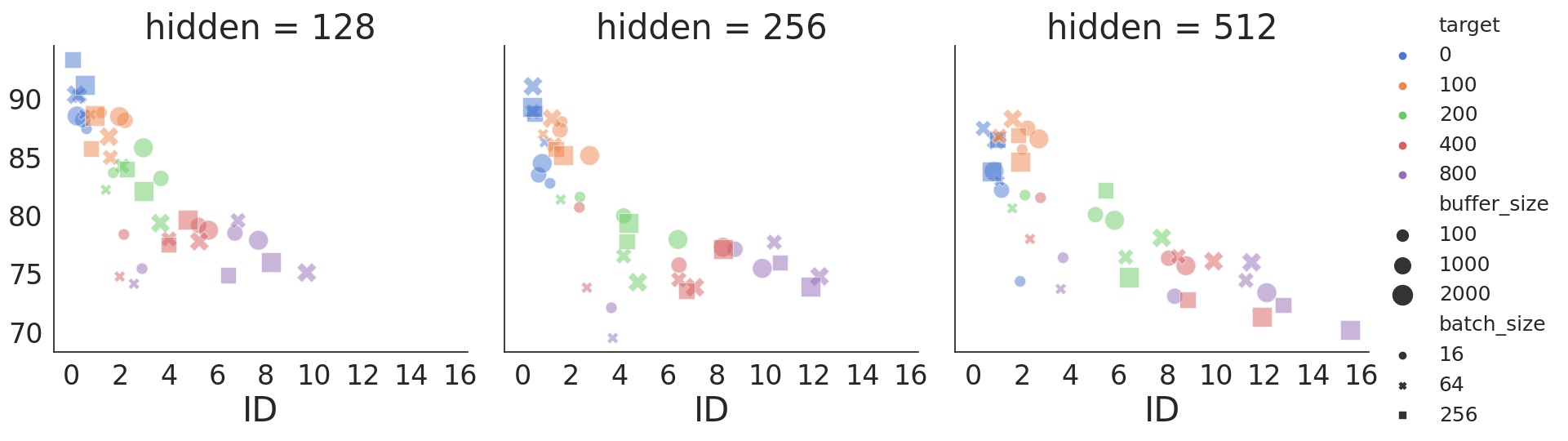}
        \caption{Online AER.}
    \end{subfigure}
    \begin{subfigure}[t]{0.48\textwidth}
        \centering
        \includegraphics[width=\textwidth]{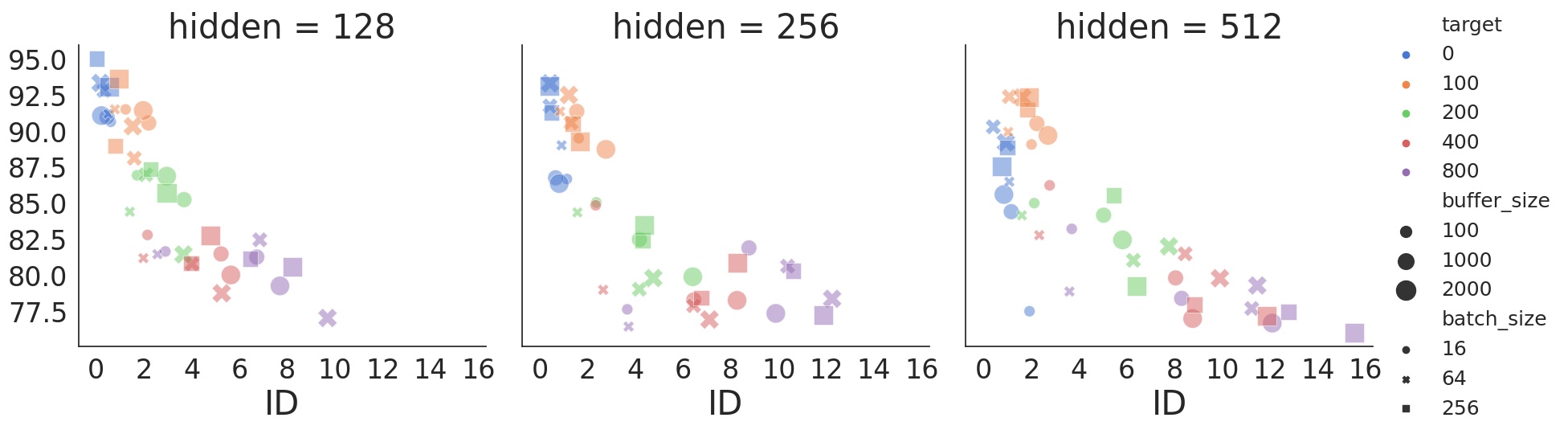}
        \caption{Offline AER.}
    \end{subfigure}
    \begin{subfigure}[t]{0.48\textwidth}
        \centering
        \includegraphics[width=\textwidth]{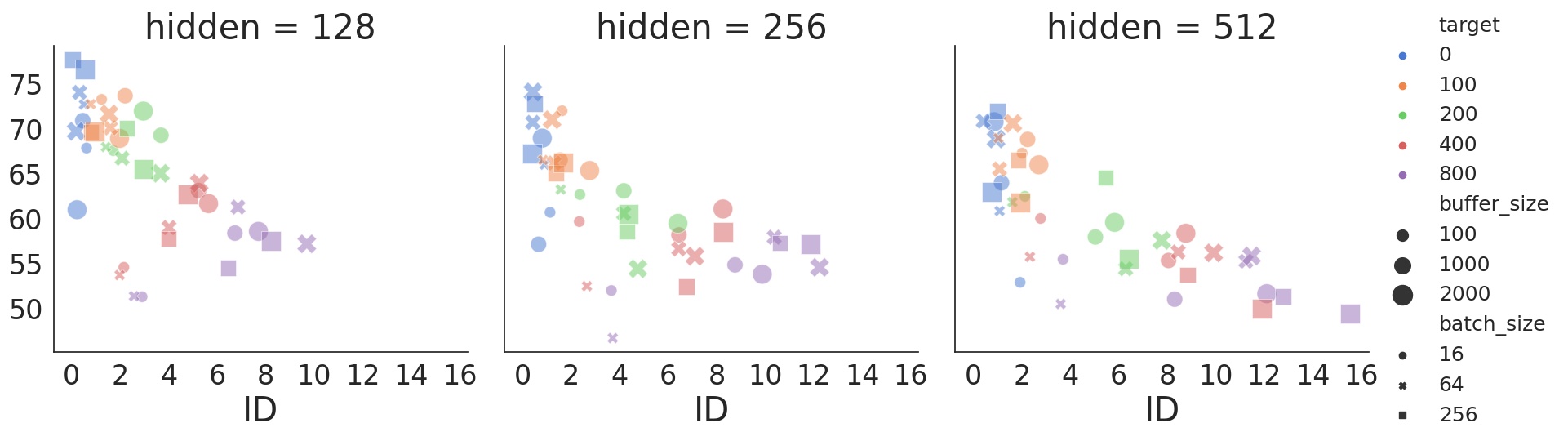}
        \caption{Online stable AER ($\beta=0.5$).}
    \end{subfigure}
    \begin{subfigure}[t]{0.48\textwidth}
        \centering
        \includegraphics[width=\textwidth]{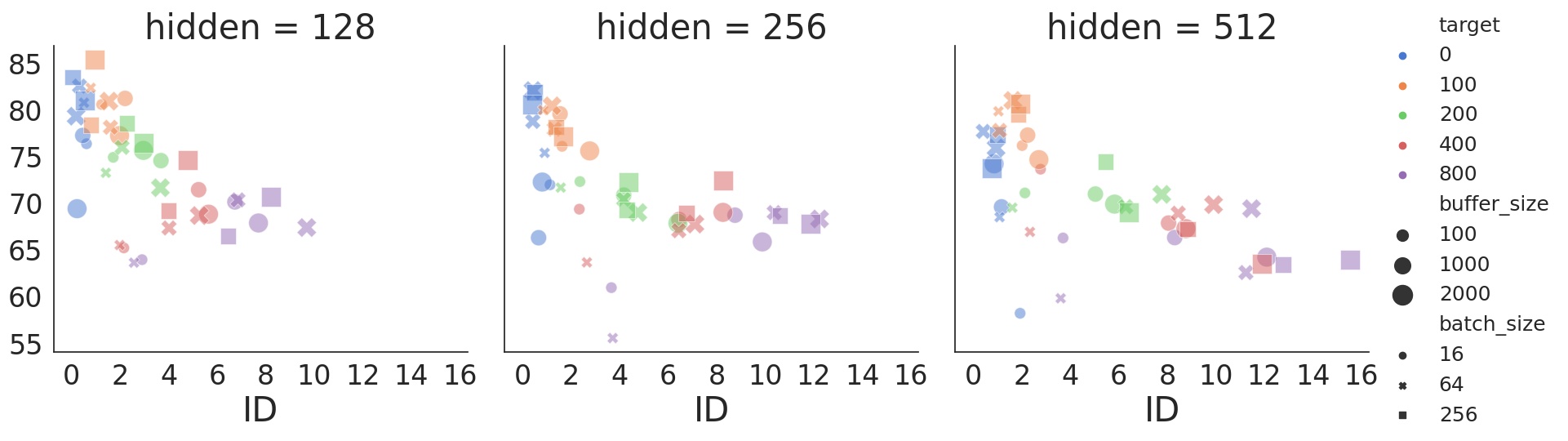}
        \caption{Offline stable AER ($\beta=0.5$).}
    \end{subfigure}
    \begin{subfigure}[t]{0.48\textwidth}
        \centering
        \includegraphics[width=\textwidth]{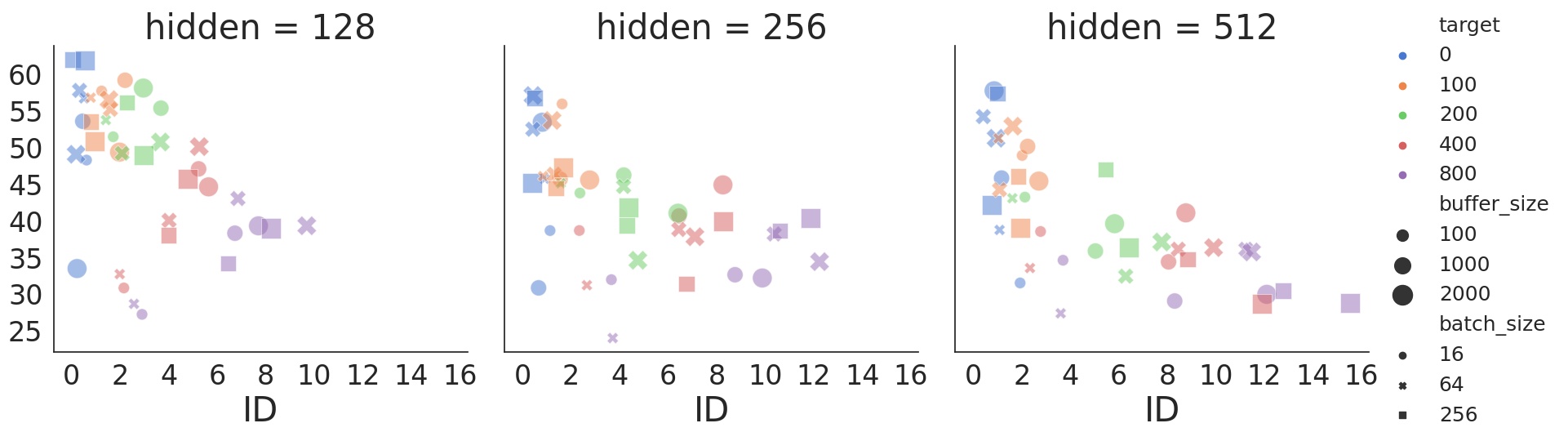}
        \caption{Online stable AER ($\beta=0$).}
    \end{subfigure}
    \begin{subfigure}[t]{0.48\textwidth}
        \centering
        \includegraphics[width=\textwidth]{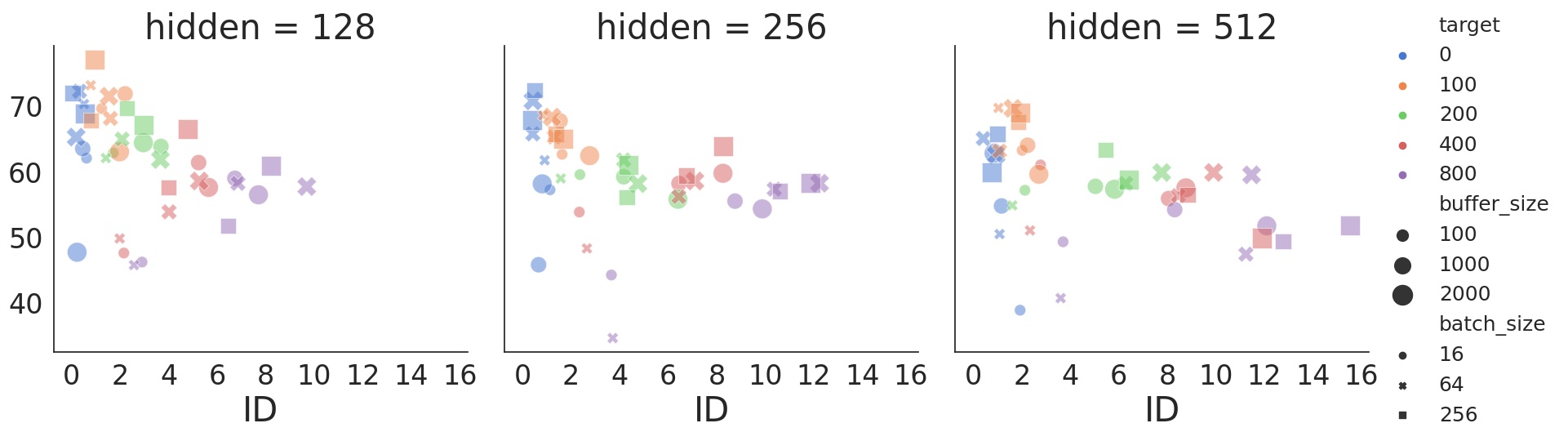}
        \caption{Offline stable AER ($\beta=0$).}
    \end{subfigure}
    \caption
    {
        Interference Dispersion vs performance measures in Cart-pole, for a variety of Deep RL agents.
    }
    \label{id_cartpole}
\end{figure*}

% 2rooms
\begin{figure*}[ht]
    \centering
    \begin{subfigure}[t]{0.48\textwidth}
        \centering
        \includegraphics[width=\textwidth]{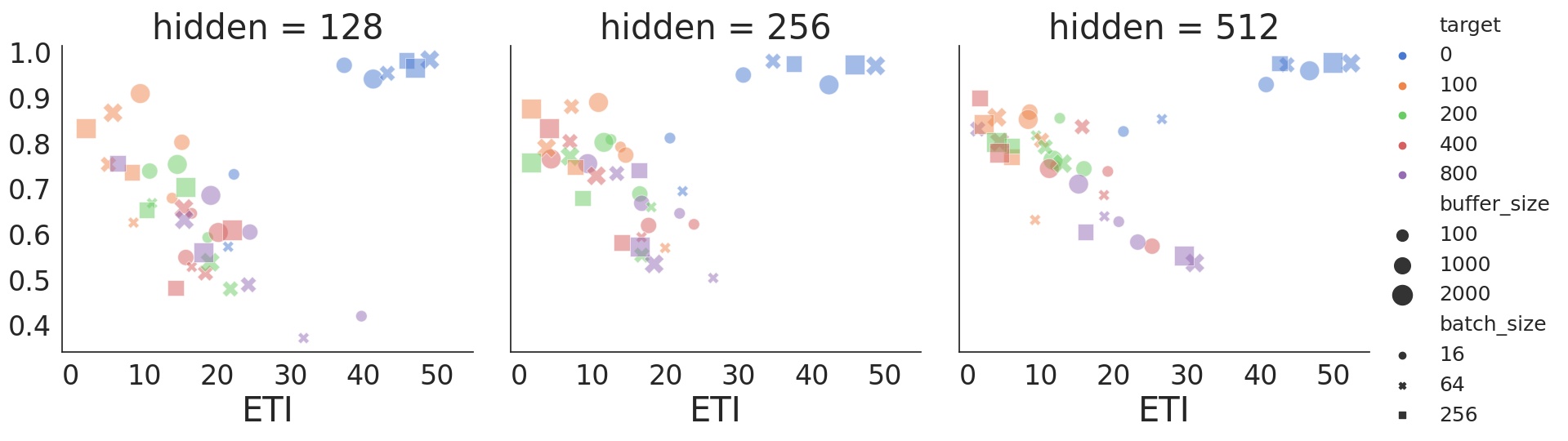}
        \caption{Online sample efficiency.}
    \end{subfigure}
    \begin{subfigure}[t]{0.48\textwidth}
        \centering
        \includegraphics[width=\textwidth]{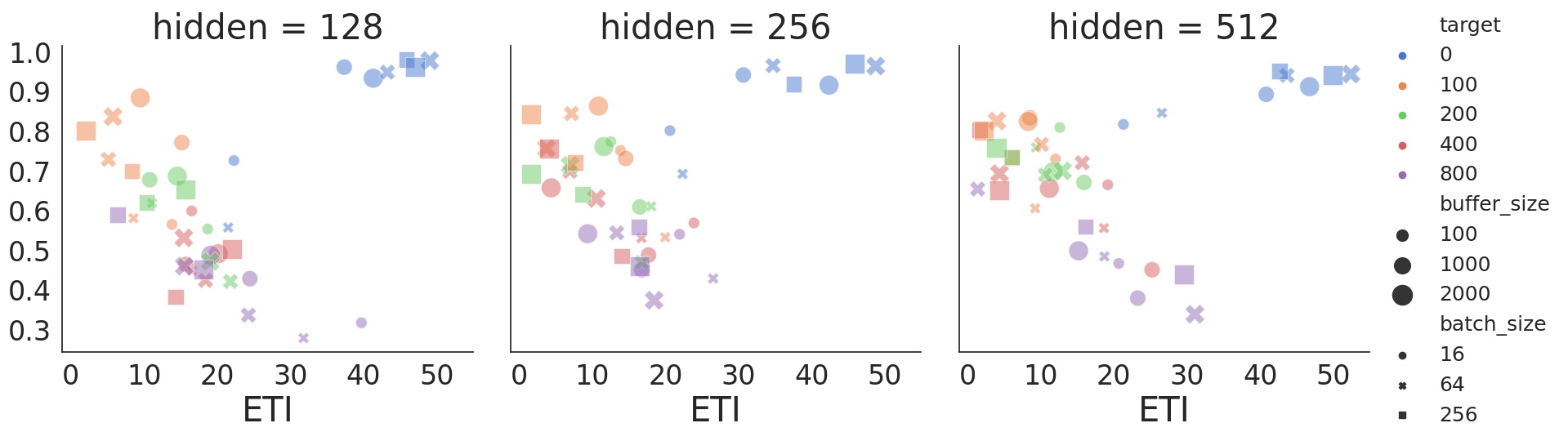}
        \caption{Offline sample efficiency.}
    \end{subfigure}
    \begin{subfigure}[t]{0.48\textwidth}
        \centering
        \includegraphics[width=\textwidth]{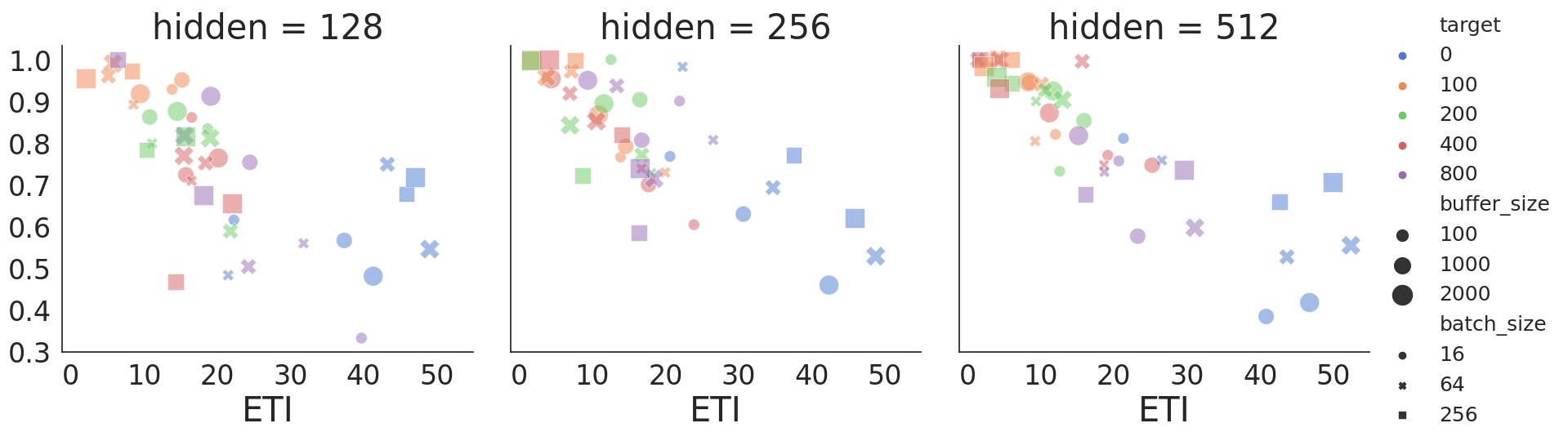}
        \caption{Online consecutive stable performance.}
    \end{subfigure}
    \begin{subfigure}[t]{0.48\textwidth}
        \centering
        \includegraphics[width=\textwidth]{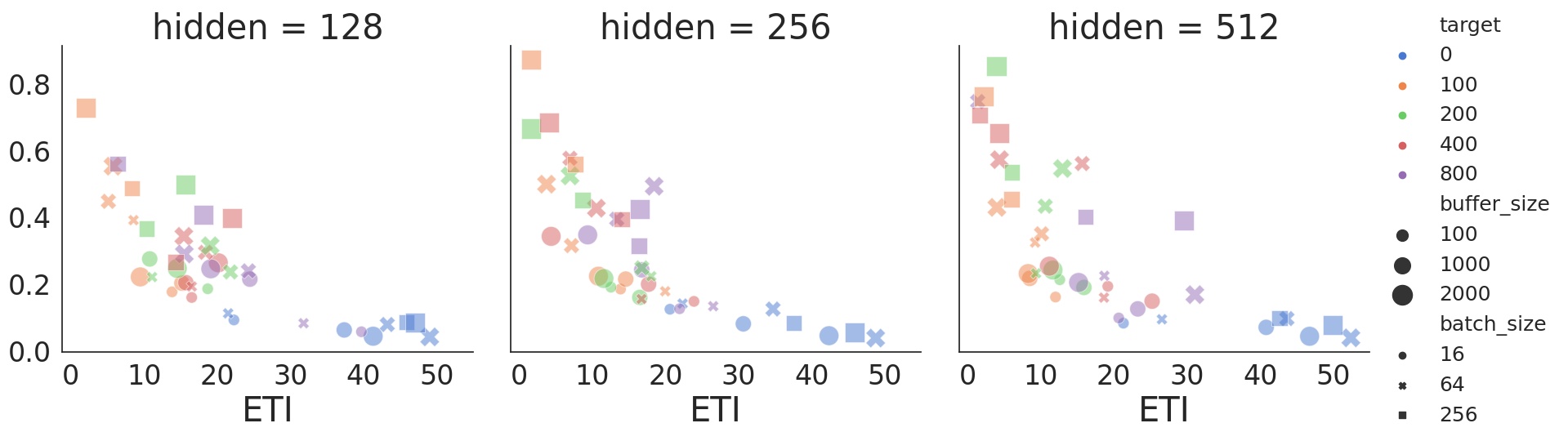}
        \caption{Offline consecutive stable performance.}
    \end{subfigure}
    \begin{subfigure}[t]{0.48\textwidth}
        \centering
        \includegraphics[width=\textwidth]{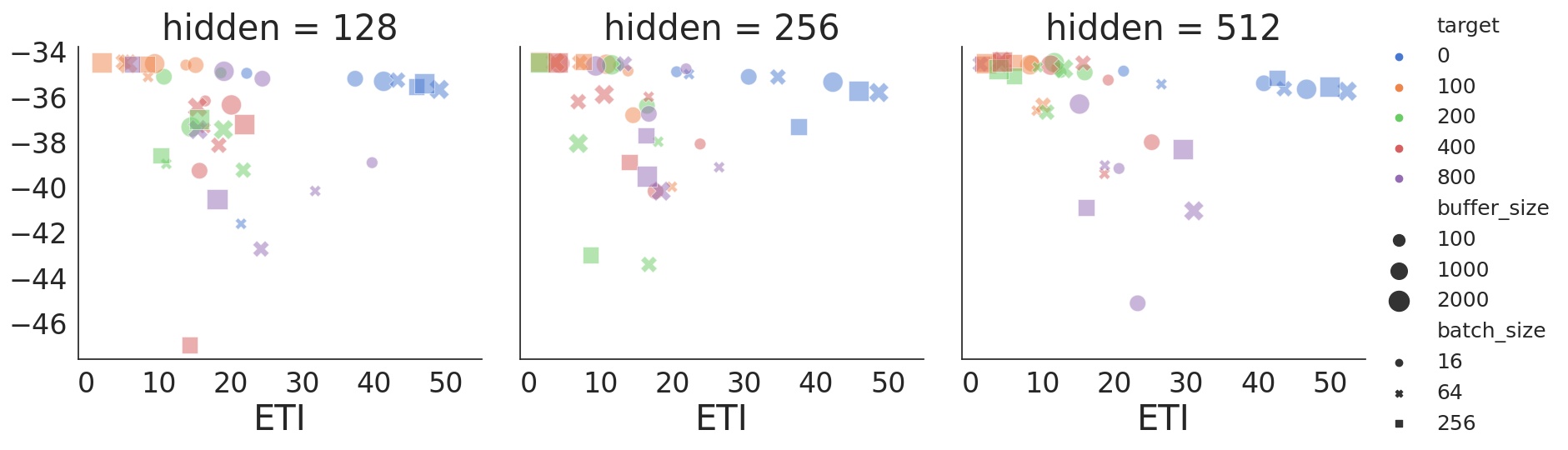}
        \caption{Online AER.}
    \end{subfigure}
    \begin{subfigure}[t]{0.48\textwidth}
        \centering
        \includegraphics[width=\textwidth]{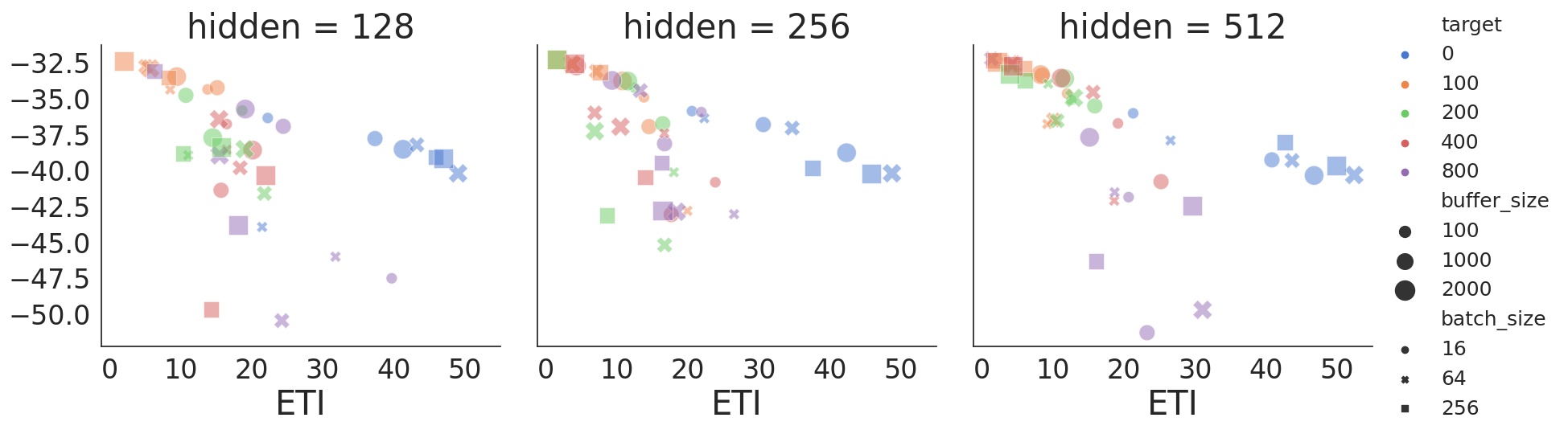}
        \caption{Offline AER.}
    \end{subfigure}
    \begin{subfigure}[t]{0.48\textwidth}
        \centering
        \includegraphics[width=\textwidth]{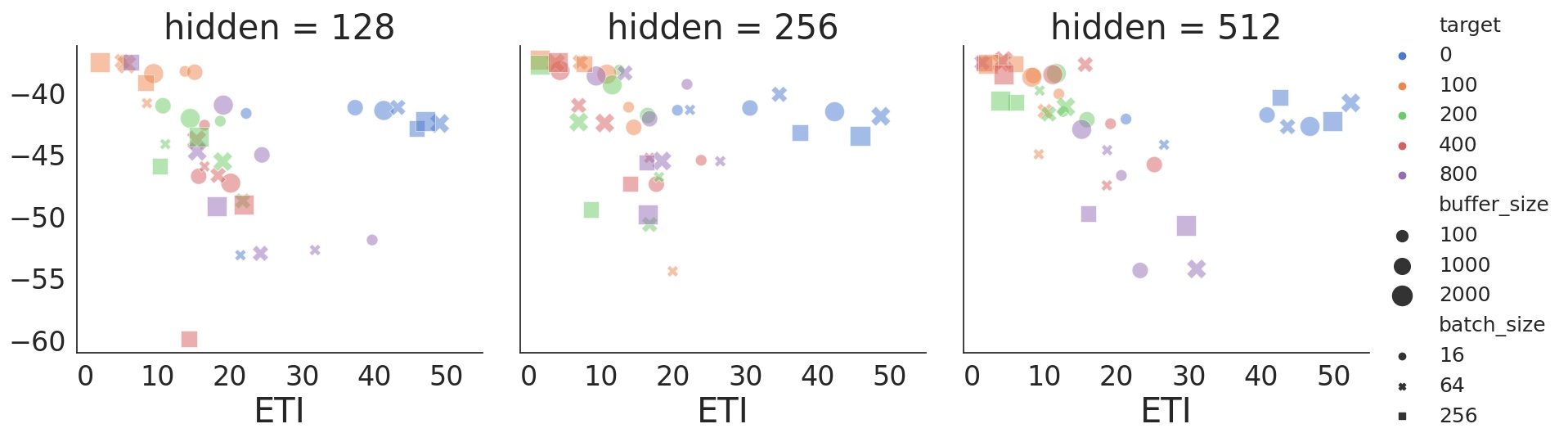}
        \caption{Online stable AER ($\beta=0.5$).}
    \end{subfigure}
    \begin{subfigure}[t]{0.48\textwidth}
        \centering
        \includegraphics[width=\textwidth]{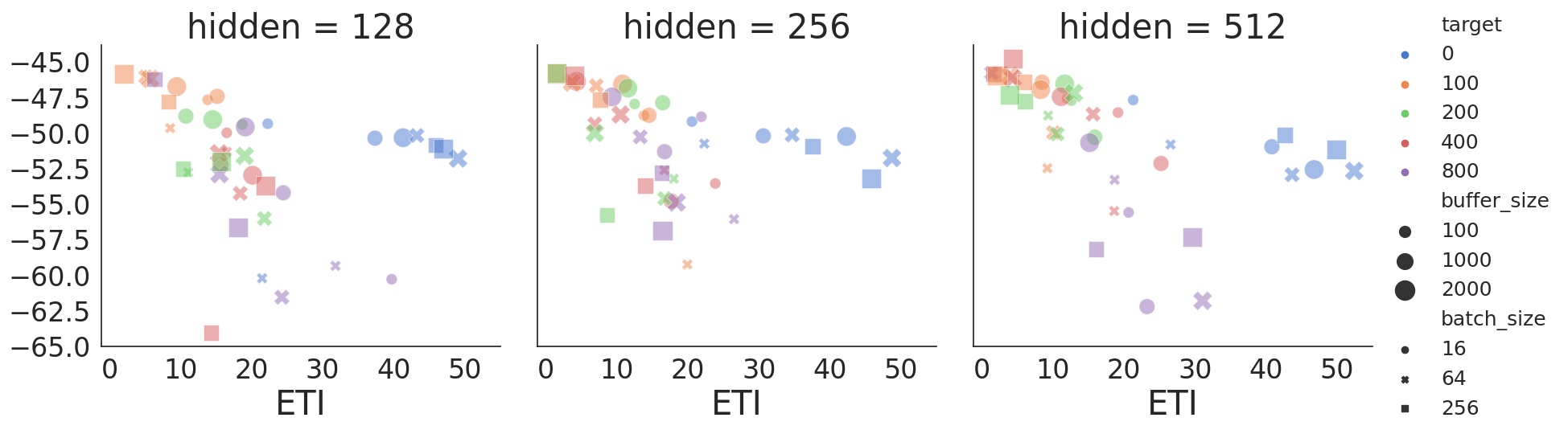}
        \caption{Offline stable AER ($\beta=0.5$).}
    \end{subfigure}
    \begin{subfigure}[t]{0.48\textwidth}
        \centering
        \includegraphics[width=\textwidth]{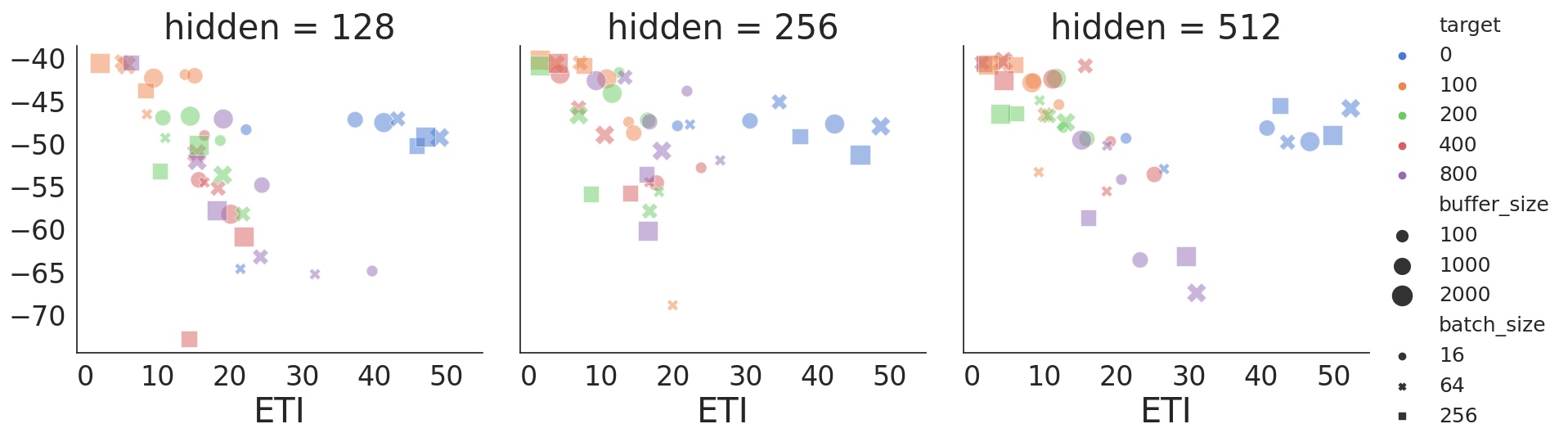}
        \caption{Online AER ($\beta=0$).}
    \end{subfigure}
    \begin{subfigure}[t]{0.48\textwidth}
        \centering
        \includegraphics[width=\textwidth]{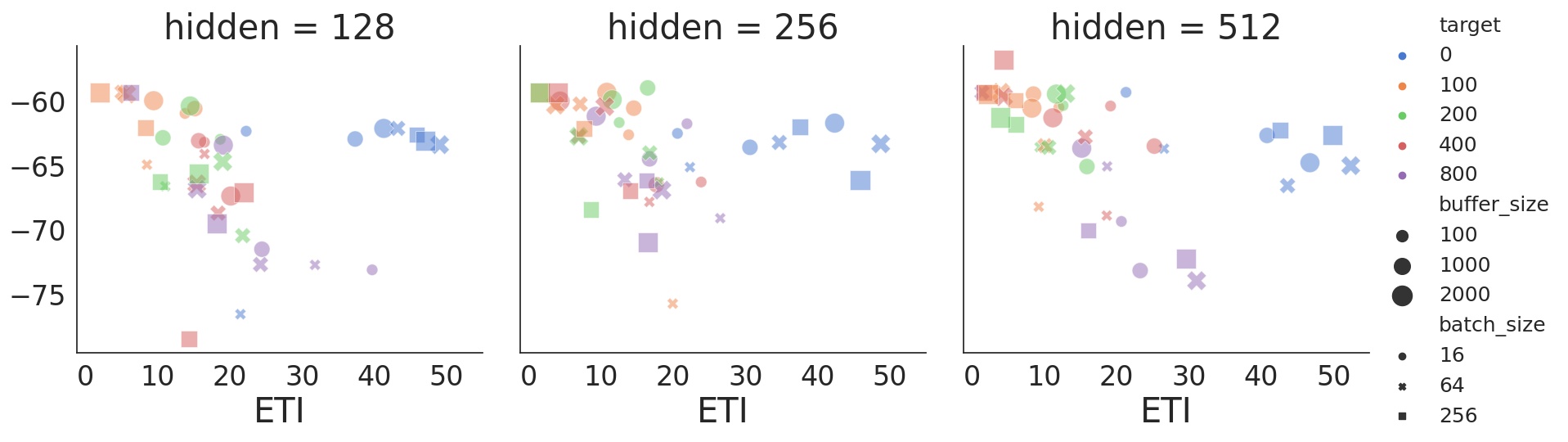}
        \caption{Offline AER ($\beta=0$).}
    \end{subfigure}
    \caption
    {
        ETI vs performance measures in Two-Rooms, for a variety of Deep RL agents.
    }
    \label{eti_tworooms}
\end{figure*}

\end{document}